\documentclass{article}

 \usepackage[preprint]{neurips_2026}

\usepackage{amsmath}
\usepackage{amssymb}
\usepackage{amsthm}
\usepackage{booktabs}
\usepackage{pifont}
\usepackage{hyperref}
\usepackage{booktabs}
\usepackage{tabularx}
\usepackage{array}
\usepackage{graphicx}
\usepackage{graphicx}
\usepackage{subcaption}
\newcolumntype{L}[1]{>{\raggedright\arraybackslash}p{#1}}
\newcolumntype{Y}{>{\raggedright\arraybackslash}X}

\usepackage{soul}

\usepackage{subcaption} 

\newtheorem{theorem}{Theorem}
\newtheorem{proposition}{Proposition}

\newtheorem{corollary}{Corollary}

\usepackage{algorithm}
\usepackage{algorithmic}
\usepackage{multirow}
\usepackage[table]{xcolor}
\usepackage{wrapfig}

\newcommand{\Eb}{\mathbb{E}}

\newtheorem{definition}{Definition}

\usepackage{subcaption} 

\newcommand{\overm}{\frac{1}{m}}
\newcommand{\summ}{\sum_{i=1}^m}
\newcommand{\cL}{\mathcal{L}}

\newcommand{\norm}[1]{\left\lVert#1\right\rVert}

\usepackage[textsize=tiny]{todonotes}

\title{Core-Halo Decomposition: Decentralizing Large-Scale Fixed-Point Problems}
\author{%
    Haixiang Sun\thanks{Equal contribution.}\\
  Purdue University\\
  \texttt{sun1321@purdue.edu} \\
  \And
  Yang Xu\footnotemark[1]\\
  Purdue University\\
  \texttt{xu1720@purdue.edu} \\
  \And
  Jiefu Zhang\\
  Purdue University\\
  \texttt{zhan4018@purdue.edu} \\
  \And
  Xudong Wu \\
  The University of Hong Kong \\
  \texttt{wu.xudong@connect.hku.hk}\\
  \And
  Zihan Zhou\\
  Johns Hopkins University\\
  \texttt{zzhou150@jh.edu} \\
  \And
  Jun He \\
  Purdue University \\
  \texttt{he184@purdue.edu}
  \And
  Jiayu Chen \\
  The University of Hong Kong \\
  \texttt{jiayuc@hku.hk}
}

\begin{document}

\maketitle

\begin{abstract}
We study solving large-scale fixed-point equation
\(x^\star=\bar F(x^\star)\) with decomposition. Standard strict decomposition assigns each agent a disjoint block and evaluates updates using only owned coordinates. For most operators, however, a block update may depend on variables outside the block. Truncating these dependencies by strict decomposition changes the mean
operator and creates structural bias that cannot be removed by more samples, smaller stepsizes, or additional consensus. We therefore propose Core-Halo decomposition, which separates write ownership from read-only evaluation context: each agent updates its own core and reads from an overlapping halo. By aligning the Core-Halo decomposition with the block-dependence structure of $\bar F$, the original fixed-point problem can be implemented faithfully in a decentralized multi-agent system. We further characterize the fundamental obstruction faced by strict decomposition through a Bellman closure condition and a blockwise bias lower bound, showing that local-only updates can alter the original fixed-point operator. Finally, we conduct extensive experiments across a range of application settings, and demonstrate that Core-Halo achieves near-centralized performance while retaining the parallelism benefits of decentralization.
\end{abstract}

\section{Introduction}
Large-scale learning and control systems are increasingly deployed in settings where data, computation, and decisions are distributed across many agents. A common mathematical abstraction behind these systems is a global fixed-point equation. This viewpoint appears in dynamic programming and Bellman equations \cite{bellman1957dynamic,watkins1992q,tsitsiklis1994asynchronous,jaakkola1994convergence}, temporal-difference and SARSA-type reinforcement learning \cite{sutton1988learning,rummery1994line,vanseijen2009expected}, PageRank-style graph computation \cite{Page1998PageRank,gleich2015pagerank}, model predictive control and sparse networked control \cite{rawlings2017model,scattolini2009architectures,christofides2013distributed}, and implicit equilibrium models in deep learning \cite{bai2019deq,winston2020mondeq,ghaoui2021implicit,fung2022jfb,sun2024understanding}.To solve these fixed-point equations, stochastic approximation provides the standard sample-path mechanism for solving such fixed-point equations when only noisy evaluations are available \cite{robbins1951stochastic,kushner2003stochastic,borkar2008stochastic}. We refer the fixed-point problem as follows: let $F: \mathbb{R}^d \times \mathcal{X} \to \mathbb{R}^d$ be a stochastic operator, where $\mathcal{X}$ is a sample space with distribution $\mu$. The goal is to find $x^\star$ satisfying
\begin{equation}\label{eq:problem_intro}
    x^\star = \bar F(x^\star), \tag{P}
\end{equation}
where $\bar F(x) := \mathbb{E}_{\xi\sim\mu}[F(x,\xi)]$. This formulation is broadly applicable: in reinforcement learning, $\bar F$ may be a Bellman mean operator; in PageRank, it is a damped graph-linear map; and in implicit models, it is the equilibrium map whose fixed point defines the layer output.

Decentralization is a natural way to tackle large-scale problems when data, computation, or decisions are distributed across many agents. Early work on distributed fixed-point and parallel computation formalized asynchronous and networked updates \cite{bertsekas1983distributed,bertsekas1989parallel}, while distributed stochastic approximation studied noisy local recursions coupled with communication \cite{kushner1987asymptotic,stankovic2011decentralized,stankovic2016distributed}.

Most of existing works assume an explicitly decomposable structure. A prototypical example is a finite-sum mean operator $\bar F(x)=\frac{1}{m}\sum_{i=1}^m \bar F_i(x)$, where agent $i$ owns a local stochastic oracle for $\bar F_i$. This model is standard in distributed optimization \cite{nedic2009distributed,boyd2011distributed} and has also been adopted in decentralized reinforcement learning and stochastic approximation \cite{kar2013qd,zhang2018fully,doan2019finite}. Recent finite-time analyses further quantify the interaction between network mixing and stochastic error in this setting \cite{sun2020finite,wang2020decentralized,zeng2023finite}. In this benchmark setting, decentralized methods can reduce the dominant stochastic term through parallel sampling while preserving the contraction-driven bias decay of a serial method. In this benchmark setting, decentralized methods can reduce the dominant stochastic term by parallelism while preserving the contraction-driven bias decay of a serial method. Specifically, we consider multi-agent Decentralized Stochastic Approximation (DSA) 
$$
\begin{aligned}
    & x_i^{k+1} = \sum_{j=1}^m w_{ij}(x_j^k + \alpha(F_j(x_j^k, \xi_j^k)-x_j^k)),\quad\forall i\in[m],
\end{aligned}
$$
and single-agent stochastic approximation (SA), whose update is $$x_{sg}^{k+1} = x_{sg}^{k} + \alpha(F_{i_k}(x_{sg}^k, \xi^k)-x_{sg}^k)$$
where $W = (w_{ij})_{i,j\in[m]}\in\mathbb{R}^{m\times m}$ is gossip matrix and $\xi_i^k, i_k, \xi^k$  are some random sampling. Then,  
\begin{proposition}[Decentralized speedup in a decomposable benchmark]\label{prop:linear-speedup}
Suppose  SA and DSA are run under the finite-sum setting above with the stepsizes $\alpha_1,\alpha_2$ and network parameter $\rho$ under some conditions. Then for any $k\geq 0$,

\begin{equation*}
\resizebox{\textwidth}{!}{$\displaystyle
\begin{aligned}
&(\mathrm{SA}):\quad
\mathbb{E}\|x_{sg}^{k+1}-x^\star\|_2^2
= \mathcal{O}\!\left((1-\alpha_1(1-\beta))^{2k}\right)\mathbb{E}\|x_{sg}^0-x^\star\|_2^2
+ \mathcal{O}\!\left(\frac{\alpha_1^2B^2}{1-(1-\alpha_1(1-\beta))^2}\right),\\
&(\mathrm{DSA})\!:
\max\left\{\!\|\bar x^k-x^\star\|^2, \frac{1}{m}\!\sum_{i = 1}^m \!\norm{x_i^k-\bar x^k}_2^2\!\right\}  \!=\! \mathcal{O}\!\left((1\!-\!\alpha_2(1\!-\!\beta)/2)^{2k}\right)
\!+\! \mathcal{O}\!\left(\!\frac{\alpha_2^2B^2}{m\bigl(1-(1-\alpha_2(1-\beta))^2\bigr)}\!\right),
\end{aligned}
$}
\end{equation*}
\end{proposition}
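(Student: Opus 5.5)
We make the unspecified "conditions" concrete. Each $\bar F_i$ is $\beta$-contractive in $\|\cdot\|_2$ with $\beta<1$, so $\bar F$ is $\beta$-contractive with unique fixed point $x^\star$. The sampling noise $F_i(x,\xi)-\bar F_i(x)$ is conditionally zero-mean, independent across agents, with second moment at most $B^2$. The index $i_k$ is uniform on $[m]$, and the heterogeneity $\|\bar F_i(x)-\bar F(x)\|$ is absorbed into $B^2$. The gossip matrix $W$ is symmetric and doubly stochastic with $\|W-\tfrac1m\mathbf 1\mathbf 1^\top\|_2\le\rho<1$, and $\alpha_1,\alpha_2\le 1$ are small relative to $1-\rho$.

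\textbf{SA.} We write $x_{sg}^{k+1}-x^\star=(1-\alpha_1)(x_{sg}^k-x^\star)+\alpha_1(\bar F(x_{sg}^k)-\bar F(x^\star))+\alpha_1 e^k$, where $e^k=F_{i_k}(x_{sg}^k,\xi^k)-\bar F(x_{sg}^k)$ is a martingale difference with $\mathbb{E}\|e^k\|^2\le B^2$. Taking conditional expectation kills the cross term. Contraction gives $\|(1-\alpha_1)(\cdot)+\alpha_1(\bar F(\cdot)-\bar F(x^\star))\|\le(1-\alpha_1(1-\beta))\|\cdot\|$, so
\[
\mathbb{E}\|x_{sg}^{k+1}-x^\star\|^2\le q_1^2\,\mathbb{E}\|x_{sg}^k-x^\star\|^2+\alpha_1^2B^2,\qquad q_1=1-\alpha_1(1-\beta).
\]
Unrolling and summing the geometric series gives the SA bound, with stochastic floor $\alpha_1^2B^2/(1-q_1^2)$.

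\textbf{DSA.} We stack the iterates and split them into the average $\bar x^k=\frac1m\sum_i x_i^k$ and the consensus error $\tilde x^k=(x_i^k-\bar x^k)_i$. Double stochasticity gives
\[
\bar x^{k+1}=\bar x^k+\alpha_2\Big(\tfrac1m\textstyle\sum_j \bar F_j(x_j^k)-\bar x^k\Big)+\tfrac{\alpha_2}{m}\textstyle\sum_j \varepsilon_j^k .
\]
We add and subtract $\bar F(\bar x^k)=\frac1m\sum_j\bar F_j(\bar x^k)$. The deterministic part then contracts by $q_2=1-\alpha_2(1-\beta)$. There is a perturbation bounded by $\alpha_2 L\,(\frac1m\sum_j\|x_j^k-\bar x^k\|^2)^{1/2}$, where $L\le1$ is the Lipschitz constant. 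The noise average has variance $B^2/m$ by independence, which is the source of the $1/m$ gain.

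Squaring and applying Young's inequality with parameter $\alpha_2(1-\beta)/2$ yields
\[
\mathbb{E}\|\bar x^{k+1}-x^\star\|^2\le (1-\alpha_2(1-\beta)/2)^2\,\mathbb{E}\|\bar x^k-x^\star\|^2+C\tfrac{\alpha_2}{1-\beta}\,\Xi^k+\tfrac{\alpha_2^2B^2}{m},
\]
where $\Xi^k=\frac1m\sum_i\|\tilde x_i^k\|^2$. For the consensus error, $\tilde x^{k+1}=(W-J)\otimes I$ applied to the local updates. Since $(W-J)$ annihilates the common part, we get
\[
\Xi^{k+1}\le \rho^2(1+\alpha_2)^2\,\Xi^k+\rho^2\alpha_2^2\big(C\,\mathbb{E}\|\bar x^k-x^\star\|^2+B^2+G^2\big),
\]
where $G$ denotes heterogeneity at $x^\star$.

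The two recursions form a $2\times 2$ linear system, and the main obstacle is showing that this coupled system contracts at rate $(1-\alpha_2(1-\beta)/2)^2$. I would do this with a Lyapunov function $V^k=\mathbb{E}\|\bar x^k-x^\star\|^2+c\,\Xi^k$, where $c\asymp \alpha_2/((1-\beta)(1-\rho^2))$. Under the stepsize condition $\alpha_2\lesssim(1-\beta)(1-\rho)$, the consensus contraction $\rho^2$ dominates and the cross terms cost only half of the contraction margin; this is where $(1-\beta)/2$ appears. Unrolling $V^{k+1}\le(1-\alpha_2(1-\beta)/2)^2V^k+\alpha_2^2B^2/m+c\rho^2\alpha_2^2B^2$ then bounds both quantities in the max.

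The remaining step is to show that the consensus noise contribution $c\rho^2\alpha_2^2B^2$ is of lower order than, or dominated by, the $\alpha_2^2B^2/(m(1-q_2^2))$ term. This also relies on the stepsize condition, which makes $\alpha_2\rho^2/(1-\rho^2)\lesssim 1/m$. Dividing by $1-(1-\alpha_2(1-\beta)/2)^2\asymp 1-q_2^2$ gives the stated floor up to constants.
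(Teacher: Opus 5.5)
Your SA argument works under your assumptions of uniformly bounded noise and heterogeneity. These are stronger than the paper's, but admissible, since the statement leaves the conditions open. They even give the rate exactly $q_1^{2k}$, whereas the paper's state-dependent bound $(L\|x-x^\star\|_2+B)^2$ produces the factor $q_1^2+2\alpha_1^2L^2$. For DSA your structure is the paper's: an average/consensus split, Young's inequality, and a weighted Lyapunov function.

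\textbf{The gap.} Your last step fails as stated: ``unrolling $V^{k+1}\le\cdots$ then bounds both quantities in the max.''
\begin{itemize}
\item Your weight $c\asymp\alpha_2/((1-\beta)(1-\rho^2))$ is at most of order one under $\alpha_2\lesssim(1-\beta)(1-\rho)$, and typically much smaller. So $V^k$ controls only $c\,\Xi^k$.
\item Dividing by $c$ turns the floor $\asymp\alpha_2B^2/(m(1-\beta))$ of $V^k$ into a consensus floor of order $(1-\rho^2)B^2/m$. This does not shrink with $\alpha_2$, so it is not $\mathcal O\bigl(\alpha_2^2B^2/(m(1-q_2^2))\bigr)$. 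The transient constant is also inflated by $1/c$.
\item The fix is an extra step. After bounding $\mathbb E\|\bar x^k-x^\star\|^2$ through $V^k$, substitute this bound into the consensus recursion and unroll that recursion on its own. This uses the fact that its contraction factor is strictly below $(1-\alpha_2(1-\beta)/2)^2$.
\item That yields a consensus floor of order $\rho^2\alpha_2^2(B^2+G^2)/(1-\rho^2)^2$. This fits the target under an $m$-dependent stepsize condition.
\end{itemize}
The paper avoids the issue by putting the large weight $1+1/\nu\asymp 1/(\alpha_2(1-\beta))\ge 1$ on $\Xi^k$, so the max is bounded by $\tilde{\mathcal L}^k$ directly. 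The price is that it must assume a very well-connected network, $\rho^2\lesssim\alpha_2^2L^2$ and $\rho^2\lesssim\alpha_2(1-\beta)/m$. Your regime instead takes the stepsize small relative to the spectral gap.

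\textbf{Bookkeeping that also needs repair.}
\begin{itemize}
\item The requirement $c\rho^2\alpha_2^2(B^2+G^2)\lesssim\alpha_2^2B^2/m$ means $\alpha_2\rho^2/((1-\beta)(1-\rho^2))\lesssim 1/m$ (with $G\lesssim B$). It is not implied by $\alpha_2\lesssim(1-\beta)(1-\rho)$, which only makes the left side $\lesssim\rho^2/(1+\rho)$. It is a separate $m$-dependent condition that you must impose explicitly.
\item Your displayed average recursion already spends the whole margin at rate $(1-\alpha_2(1-\beta)/2)^2$. So the feedback term $c\rho^2\alpha_2^2C$ from the consensus recursion cannot be absorbed into the coefficient of $\mathbb E\|\bar x^k-x^\star\|^2$. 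Keep the sharper factor $(1+\tfrac{\alpha_2(1-\beta)}{2})q_2^2\le(1-\tfrac{3\alpha_2(1-\beta)}{4})^2$ and spend the remaining margin on the coupling. This is exactly what the paper's choice of $\nu$ does.
\item In the consensus recursion, the cross term between $(1-\alpha_2)(x_i-\bar x)+\alpha_2(\bar F_i(x_i)-\bar F_i(\bar x))$ and the heterogeneity $\alpha_2(\bar F_i(\bar x)-\bar F(\bar x))$ does not vanish. So you cannot have both the coefficient $\rho^2(1+\alpha_2)^2$ and an additive term of order $\rho^2\alpha_2^2$. You can inflate the $\rho^2$ coefficient by a constant factor, as the paper does with $2\rho^2$; a spectral-gap version of this costs an extra $1/(1-\rho^2)$. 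Otherwise the additive term is of order $\rho^2\alpha_2$, which forces a network condition rather than a stepsize condition.
\end{itemize}
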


The details are in Appendix~\ref{sec:problem_set}. Proposition~\ref{prop:linear-speedup}  explains why decentralization is attractive: in a compatible decomposition, parallel sampling can shrink the steady-state stochastic error by a factor proportional to the number of agents. However, this benefit is meaningful only if the decentralized construction is faithful to the original fixed-point equation. Many large-scale problems including complex control tasks are not naturally decomposed as independent finite-sum components \cite{bellman1957dynamic,wei2019presslight,gleich2015pagerank}. Instead, the global vector is partitioned by coordinates, states, graph nodes, buses, or intersections, and the update for one owned block may depend on variables outside that block.

This is where naive decentralization can fail. A natural first attempt is a strict decomposition: each agent owns a disjoint block and evaluates updates using only information inside that block. While computationally appealing, strict decomposition is generally not exact for most fixed-point problems. In a Bellman equation, for example, the backup of an owned state-action pair may depend on successor states located in another region. If those cross-boundary values are truncated through strict decomposition, the resulting error is structural: it is caused by changing $\bar F$ itself, and therefore cannot be removed by more samples, smaller stepsizes, or additional consensus applied after the missing information has already been discarded.

Motivated by the above challenges, we introduce a Core-Halo decomposition. Intuitively, each agent owns a core set of coordinates that can only be updated by this agent itself, and is also assigned a halo set, which specifies the neighboring coordinates readable by it. Halo overlap is read-only: it supplies boundary context without creating duplicated ownership or conflicting writes. 
Based on Core-Halo decomposition, an operator implementable under decentralization is developed and it has exactly the same fixed points as in original problem. In words, Core-Halo separates two roles that strict decomposition conflates: who owns an update, and what information is needed to evaluate that update faithfully. Our main contributions are summarized as follows:
\begin{itemize}
    \item We identify a structural failure mode of strict local decomposition for decentralized fixed-point solving: when local updates require cross-boundary information, strict locality replaces the original mean operator by a biased surrogate.
    \item We propose the Core-Halo framework, which keeps update ownership disjoint while allowing overlapping read-only evaluation context, and we prove exact fixed-point recovery under an explicit locality condition.
    \item We validate the framework on extensive downstream scenarios, showing the wide applicability of the Core-Halo framework.
\end{itemize}

\section{Strict Local Decomposition}
Proposition~\ref{prop:linear-speedup} shows the benefit of decentralization in a favorable setting where the global operator is already given in a decomposition compatible with the agents. For a general fixed-point problem~\eqref{eq:problem_intro}, however, the operator \(\bar F\) is not necessarily presented in such a decentralized form. One must first decide how the global vector is partitioned and what information each agent may use to evaluate its local update.

A natural first attempt is a hard, nonoverlapping partition of the variables, in the spirit of classical parallel computation and nonoverlapping domain-decomposition schemes \cite{bertsekas1989parallel,smith1996domain,toselli2005domain}. Each agent owns one block, stores and samples only within that block, and evaluates its block update using only the information available inside it. This design is computationally attractive because it turns one large problem into smaller local problems with localized storage and sampling. Its limitation is that the locality is imposed by construction: if the true block update depends on variables outside the owned block, strict localization changes the mean operator itself. Any later communication or gossip step can average the quantities that were computed, but it cannot restore cross-boundary information that never entered the local update.

\begin{wrapfigure}{r}{0.3\linewidth}
  \centering
  \includegraphics[width=\linewidth]{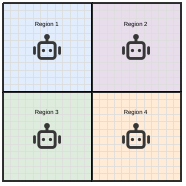}
  \caption{Strict Local Decomposition.}
  \label{fig:disjoint-regions}
  \vspace{-20pt}
\end{wrapfigure}

We first recall the notion of a partition, which will be used throughout the paper.
\begin{definition}
A collection of sets \(D_1, \cdots, D_m \subseteq \Omega\) is said to form a partition of \(\Omega\), if all \(D_i\)'s are pairwise disjoint and \(\cup_{i=1}^m D_i = \Omega\).
\end{definition}

Strict local decomposition formalizes the hard-partition rule at the level of the induced mean operator.
\begin{definition}[Strict  Decomposition]
Let \(D_1,\dots,D_m\) be a partition of \([d]\). We say that \(T^{\mathrm{hard}}:\mathbb{R}^d\to\mathbb{R}^d\) is a strict decomposition operator if there exist local maps \(G_i:\mathbb{R}^{|D_i|}\to\mathbb{R}^{|D_i|}\) such that
\[
(T^{\mathrm{hard}}(x))_{D_i}=G_i(x_{D_i}), \qquad i\in[m].
\]
\end{definition}
In words, agent \(i\) is allowed to use only its owned block \(x_{D_i}\) when producing an update on \(D_i\). Therefore, strict decomposition can be exact only when the true block \([\bar F(x)]_{D_i}\) is itself determined by \(x_{D_i}\). The next quantity measures precisely how much this condition fails.

\begin{proposition}[Bias of Strict Decomposition]
\label{thm:strict decomposition bias}
Fix \(i\in[m]\) and \(u\in\mathbb{R}^{|D_i|}\). Define
\[
\Delta_i(u)
:=
\sup\Bigl\{
\|(\bar F(x))_{D_i}-(\bar F(x'))_{D_i}\|_c
:
x_{D_i}=x'_{D_i}=u
\Bigr\}.
\]
Then every strict-decomposition operator \(T^{\mathrm{hard}}\) satisfies
\[
\sup_{x:\,x_{D_i}=u}
\|(T^{\mathrm{hard}}(x)-\bar F(x))_{D_i}\|_c
\geq
\frac{1}{2}\Delta_i(u).
\]
\end{proposition}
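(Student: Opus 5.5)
The argument is a triangle-inequality (``two points cannot both be close to one value'') argument. The key structural fact is that a strict-decomposition operator is constant on the fiber \(\{x : x_{D_i}=u\}\) in its \(D_i\)-block. By definition, \((T^{\mathrm{hard}}(x))_{D_i}=G_i(x_{D_i})=G_i(u)\) for every such \(x\), so the block output is a single fixed vector \(v:=G_i(u)\in\mathbb{R}^{|D_i|}\) that does not depend on the off-block coordinates \(x_{[d]\setminus D_i}\). Meanwhile, the true block \((\bar F(x))_{D_i}\) may vary over this fiber, and \(\Delta_i(u)\) is exactly the diameter (in \(\|\cdot\|_c\)) of the set \(S_u:=\{(\bar F(x))_{D_i}: x_{D_i}=u\}\). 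The claim then reduces to the elementary fact that any point \(v\) has distance at least half the diameter from some point of \(S_u\).

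The steps are as follows. Fix any \(x,x'\) with \(x_{D_i}=x'_{D_i}=u\). By the triangle inequality for the norm \(\|\cdot\|_c\),
\[
\|(\bar F(x))_{D_i}-(\bar F(x'))_{D_i}\|_c\le \|(\bar F(x))_{D_i}-v\|_c+\|v-(\bar F(x'))_{D_i}\|_c .
\]
Since \((T^{\mathrm{hard}}(x))_{D_i}=(T^{\mathrm{hard}}(x'))_{D_i}=v\), each term on the right is at most \(M:=\sup_{z:\,z_{D_i}=u}\|(T^{\mathrm{hard}}(z)-\bar F(z))_{D_i}\|_c\). Hence \(\|(\bar F(x))_{D_i}-(\bar F(x'))_{D_i}\|_c\le 2M\). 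Taking the supremum over all admissible pairs \((x,x')\) gives \(\Delta_i(u)\le 2M\), which is the statement.

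The only care needed is with infinite values. If \(M=+\infty\), there is nothing to prove. If \(\Delta_i(u)=+\infty\), the same inequality forces \(M=+\infty\), so the bound holds in the extended reals. No structure of \(\bar F\) (contraction, expectation, etc.) is used, only that \(\|\cdot\|_c\) is a norm on \(\mathbb{R}^{|D_i|}\).

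There is no real obstacle here. The one point to state explicitly is the constancy of \(T^{\mathrm{hard}}\) on the fiber, which is exactly where strictness enters. It may also be worth remarking that the constant \(1/2\) is sharp: take \(S_u\) to be two points and \(v\) their midpoint.
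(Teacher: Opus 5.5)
Your proposal is correct and matches the paper's proof: both use the fact that $(T^{\mathrm{hard}}(x))_{D_i}=G_i(u)$ is the same for every $x$ with $x_{D_i}=u$, bound $\|(\bar F(x))_{D_i}-(\bar F(x'))_{D_i}\|_c$ by twice the supremum via the triangle inequality, and then take the supremum over admissible pairs. Your extra remarks on infinite values and on the sharpness of the constant $\tfrac12$ are fine additions and do not change the argument.
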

The proof is in Appendix~\ref{app:prop_sd_bias}. The quantity \(\Delta_i(u)\) is the diameter of the set of possible true block updates when the local block is fixed at \(u\) and only the outside coordinates vary. If \(\Delta_i(u)>0\), then a strict local map \(G_i(u)\) must assign one value to several possible true updates, so it must be wrong by at least half of this diameter on some instance. Proposition~\ref{thm:strict decomposition bias} therefore identifies a structural obstruction: once \([\bar F(x)]_{D_i}\) depends on coordinates outside \(D_i\), no block-local rule can uniformly recover the true mean operator on that block.

The obstruction is especially transparent for Bellman fixed points. Consider a finite discounted MDP \((\mathcal S,\mathcal A,P,r,\gamma)\), and let \(\{C_i\}_{i=1}^m\) be a partition of the state space such that \(\cup_{i\in[m]} C_i=\mathcal S\), the sets \(C_i\) are pairwise disjoint, and each \(C_i\) is connected. For Q-learning, the corresponding coordinate block is \(D_i=C_i\times\mathcal A\), and the original problem seeks the fixed point of
\[
(HQ)(s,a)
=
\sum_{s'\in\mathcal S}
P(s'| s,a)
\Bigl(r(s,a,s')+\gamma \max_{b\in\mathcal A}Q(s',b)\Bigr).
\]
An exact strict decomposition exists if and only if every component is dynamically closed:
\[
P(C_i| s,a)=1,
\qquad \forall s\in C_i,\ \forall a\in\mathcal A .
\]
Indeed, if the condition holds, then every Bellman backup starting from \(C_i\) uses only values in \(C_i\times\mathcal A\), so the block update is a function of \(Q_{D_i}\). Conversely, if there is a transition from some \(s\in C_i\) to a state outside \(C_i\) with positive probability, then changing the outside successor-state value changes \((HQ)(s,a)\) while leaving \(Q_{D_i}\) unchanged; no strict local map can represent both cases. Thus strict decomposition recovers the original Bellman problem only under the stringent closure condition above, which rules out all cross-component transitions under every action. Such closure is rarely satisfied in coupled MDPs, making strict decomposition too restrictive as a general decentralization principle. Details are given in Appendix~\ref{app:strict-decomposition-qlearning-gap}.

\section{Core-Halo Decomposition}\label{sec:Core-Halo}
The previous section shows that the limitation of strict decomposition is not merely a matter of consensus error, insufficient communication rounds, or noisy sampling. The failure is more fundamental: strict decomposition assumes that each owned block is closed under the mean operator. Equivalently, for every core $D_i$, the block $[\bar F(x)]_{D_i}$ must be determined by the local coordinates $x_{D_i}$ alone. Coupled fixed-point problems rarely have this property. More typically, the update on a local block depends on the block itself together with a small surrounding set of neighboring coordinates.

This motivates looking directly at the block-dependence structure of the global operator. For a candidate core $D_i\subseteq[d]$, call a set $S_i\supseteq D_i$ sufficient for evaluating the block $D_i$ if
\[
    x_{S_i}=x'_{S_i}
    \quad\Longrightarrow\quad
    [\bar F(x)]_{D_i}=[\bar F(x')]_{D_i},
    \qquad \forall x,x'\in\mathbb R^d .
\]
Equivalently, there exists a local map $T_i:\mathbb R^{|S_i|}\to\mathbb R^{|D_i|}$ such that
\[
    T_i(x_{S_i})=[\bar F(x)]_{D_i},
    \qquad \forall x\in\mathbb R^d .
\]
Thus $S_i$ is not another set of coordinates owned by agent $i$; it is the information required to evaluate the true update on $D_i$. Strict decomposition is viewed as a special case $S_i=D_i$, which is exact only when the block is already closed. The Core-Halo construction removes this unnecessary restriction.

\begin{definition}[Core-Halo]\label{def:Core-Halo}
For any index set $U\subseteq[d]$, let $x_U$ denote the restriction of $x$ to the coordinates in $U$. A family of pairs $\{(D_i,S_i)\}_{i=1}^m$, with $D_i,S_i\subseteq[d]$, is called a Core-Halo decomposition if $D_i\subseteq S_i\subseteq[d]$ for every $i$, and the cores $D_1,\ldots,D_m$ form a partition of $[d]$. Agent $i$ may read the variables in its halo $S_i$, but may write updates only on its core $D_i$. We say that the decomposition is compatible with $\bar F$ if each halo $S_i$ is sufficient for evaluating $[\bar F(x)]_{D_i}$ in the sense above.
\end{definition}

The Core-Halo framework is the key distinction from both centralized computation and strict local decomposition. The cores are disjoint, so every coordinate has a unique owner and no update is duplicated. The halos may overlap, so different agents may read the same boundary variables when those variables are needed for their local evaluations. Overlap is therefore purely informational. It supplies boundary context, but it does not create shared ownership or conflicting writes. Consequently, the decentralized implementation remains local while the local update can still match the block of the original global operator.

\begin{wrapfigure}{r}{0.38\linewidth}
  \centering\vspace{-6pt}
  \includegraphics[width=\linewidth]{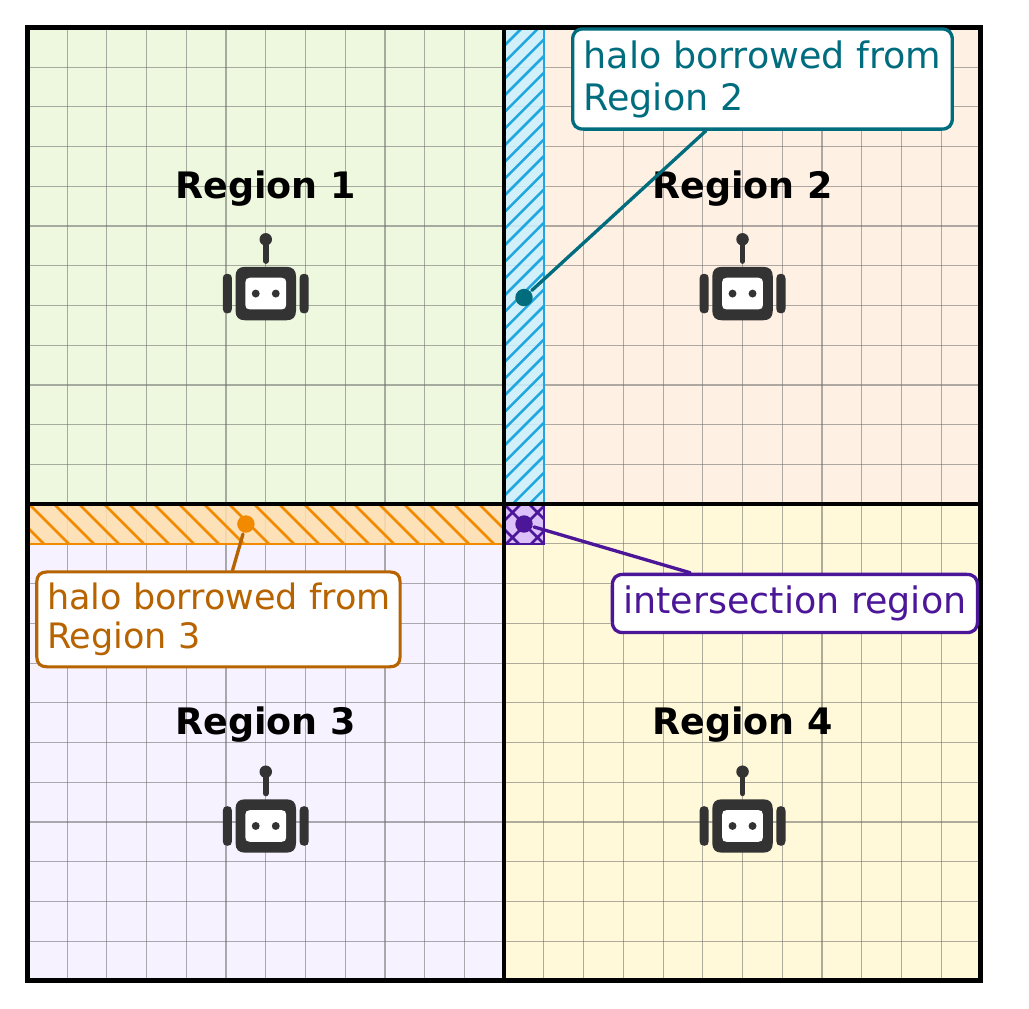}
  \caption{Core updates, halo for local context.}
  \label{fig:Core-Halo}
  \vspace{-30pt}
\end{wrapfigure}

We now state the structural payoff. Core-Halo does not require block closure under $\bar F$ and it only requires that every variable needed to evaluate the update on $D_i$ be present in the halo $S_i$. This condition is exactly aligned with the operator's intrinsic block-dependence structure.

\begin{theorem}[Exact recovery by Core-Halo]\label{lem:ch1}\label{thm:adv-ch}
Let $\{D_i\}_{i=1}^m$ be a partition of $[d]$. Suppose the Core-Halo decomposition $\{(D_i,S_i)\}_{i=1}^m$ is compatible with $\bar F$, so for each $i\in[m]$ there exists a map $T_i:\mathbb R^{|S_i|}\to\mathbb R^{|D_i|}$ such that $T_i(x_{S_i})=[\bar F(x)]_{D_i},
    \; \forall x\in\mathbb R^d $. Define the lifted Core-Halo operator $\widetilde T_i:\mathbb R^d\to\mathbb R^d$ by $ [\widetilde T_i(x)]_{D_i}:=T_i(x_{S_i}),
    \;
    [\widetilde T_i(x)]_{[d]\setminus D_i}:=x_{[d]\setminus D_i} $. Then 
    \begin{equation}
        \frac1m\sum_{i=1}^m \widetilde T_i(x)
    =
    \left(1-\frac1m\right)x+\frac1m\bar F(x),
    \; \forall x\in\mathbb R^d,
    \end{equation} and therefore $ x=\frac1m\sum_{i=1}^m \widetilde T_i(x)$ is equivalent to
    $x=\bar F(x)$.
\end{theorem}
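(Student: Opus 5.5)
The plan is to verify the identity coordinatewise, using only the partition property of the cores and the compatibility relation $T_i(x_{S_i})=[\bar F(x)]_{D_i}$. Fix $x\in\mathbb R^d$ and a coordinate $\ell\in[d]$. Since $D_1,\dots,D_m$ partition $[d]$, there is exactly one index $j(\ell)$ with $\ell\in D_{j(\ell)}$.

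By the definition of the lifted operators:
\begin{itemize}
\item for $i=j(\ell)$, the coordinate $\ell$ lies in the core, so $[\widetilde T_i(x)]_\ell=[T_i(x_{S_i})]_\ell=[\bar F(x)]_\ell$ by compatibility;
\item for $i\neq j(\ell)$, the coordinate $\ell$ lies in $[d]\setminus D_i$, so $[\widetilde T_i(x)]_\ell=x_\ell$.
\end{itemize}
Summing over $i$, the $\ell$-th coordinate of $\frac1m\sum_i\widetilde T_i(x)$ equals
\[
\frac1m\bigl((m-1)x_\ell+[\bar F(x)]_\ell\bigr)=\Bigl(1-\frac1m\Bigr)x_\ell+\frac1m[\bar F(x)]_\ell .
\]
Since $\ell$ was arbitrary, this gives the claimed vector identity.

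For the equivalence of fixed points, substitute the identity into $x=\frac1m\sum_i\widetilde T_i(x)$. This gives $x=(1-\frac1m)x+\frac1m\bar F(x)$, i.e.\ $\frac1m x=\frac1m\bar F(x)$. Multiplying by $m\ge1$ yields $x=\bar F(x)$, and each step is reversible.

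There is no real obstacle. The only point needing care is that disjointness of the cores is what makes exactly one term per coordinate contribute $\bar F$. If the cores overlapped, a coordinate could receive several $\bar F$ contributions; if they failed to cover $[d]$, some coordinate would receive none. In either case the weights would no longer be $1-\frac1m$ and $\frac1m$ uniformly, and the fixed-point equivalence could break.
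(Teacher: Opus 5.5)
Your proposal is correct and follows essentially the same route as the paper's proof. Both fix a coordinate, use the partition to identify its unique owning core, apply compatibility there and the identity elsewhere, and average; the fixed-point equivalence then follows by cancelling the $\left(1-\frac1m\right)x$ term.
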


The proof is in Appendix~\ref{subsec:thm1}. The theorem says that the averaged lifted Core-Halo operator is a relaxed version of the original mean operator, and hence has exactly the same fixed points, making the lifted formulation useful for general decentralized implementations. This establishes the formal separation from strict decomposition. Proposition~\ref{thm:strict decomposition bias} shows that if $\Delta_i(u)>0$ for some block and local configuration, then every strict local operator incurs nonzero blockwise bias. Core-Halo avoids this obstruction whenever the missing outside variables are included in the read-only halo: for all $i\in[m]$ and all $x\in\mathbb R^d$,
\[
    [\widetilde T_i(x)]_{D_i}=[\bar F(x)]_{D_i}.
\]
Thus outside coordinates need not be owned by agent $i$; they only need to be readable when they are part of the true block update.

{\bf Specialization of Bellman operator.}
A direct application is the Bellman fixed-point problem. Let $H$ denote the Bellman optimality operator of a finite discounted MDP. Let $\{C_i\}_{i=1}^m$ be a partition of the state space $\mathcal S$, and let $S_i\supseteq C_i$ be state halos satisfying the one-step successor condition
\[
    P(S_i\mid s,a):=\sum_{s'\in S_i}P(s'\mid s,a)=1,
    \qquad \forall s\in C_i,\ \forall a\in\mathcal A .
\]

\begin{corollary}[Exact recovery of the global Bellman equation under a Core-Halo decomposition]
\label{thm:exact-recovery-Core-Halo}
Let $H$ denote the Bellman operator. Assume $\{C_i\}_{i=1}^m$ is a partition of $\mathcal S$, and assume the state halos $S_i\supseteq C_i$ satisfy $P(S_i\mid s,a)=1$ for every $s\in C_i$ and $a\in\mathcal A$. For each agent $i$, define the local Bellman operator on its owned core by
\[
(H_iQ)(s,a)
:=
\sum_{s'\in S_i} P(s' | s,a)\bigl(r(s,a,s')+\gamma\max_{b\in\mathcal{A}}Q(s',b) \bigr),
\qquad (s,a)\in C_i\times\mathcal A,
\]
and the lifted operator $\widetilde H_i$ on $\mathbb R^{|\mathcal S||\mathcal A|}$ by
\[
(\widetilde H_iQ)(s,a)
:=
\begin{cases}
(H_iQ)(s,a), & s\in C_i,\\[1mm]
Q(s,a), & s\notin C_i.
\end{cases}
\]
Then the averaged operator $\hat H(Q):=\frac1m\sum_{i=1}^m \widetilde H_i(Q)$ satisfies
$\hat{H}(Q)=\left(1-\frac1m\right)Q+\frac1m H(Q)$ for any
$Q\in\mathbb R^{|\mathcal S||\mathcal A|}$, and therefore has exactly the same fixed point as the operator $H$.
\end{corollary}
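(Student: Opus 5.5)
The plan is to reduce the corollary to Theorem~\ref{thm:adv-ch}, applied with the Bellman operator $H$ as the mean operator $\bar F$ and coordinates indexed by $\mathcal S\times\mathcal A$. The cores are $D_i:=C_i\times\mathcal A$ and the coordinate halos are $\widehat S_i:=S_i\times\mathcal A$. Since $\{C_i\}$ partitions $\mathcal S$, the sets $\{D_i\}$ partition $\mathcal S\times\mathcal A$, and $D_i\subseteq \widehat S_i$ because $C_i\subseteq S_i$. So $\{(D_i,\widehat S_i)\}$ is a Core-Halo decomposition in the sense of Definition~\ref{def:Core-Halo}. It remains to check compatibility and to identify $\widetilde H_i$ with the lifted operator $\widetilde T_i$ of the theorem.

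The key step is to show that $H_iQ$ agrees with $HQ$ on $D_i$. Fix $s\in C_i$ and $a\in\mathcal A$. The hypothesis $P(S_i\mid s,a)=1$ with nonnegative probabilities gives $P(s'\mid s,a)=0$ for every $s'\notin S_i$. Every term of the full sum defining $(HQ)(s,a)$ with $s'\notin S_i$ therefore vanishes, since $r$ and $Q$ are finite in a finite MDP. Dropping these terms gives $(HQ)(s,a)=(H_iQ)(s,a)$.

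Moreover, $(H_iQ)(s,a)$ depends only on the values $Q(s',b)$ with $s'\in S_i$ and $b\in\mathcal A$, that is, only on $Q_{\widehat S_i}$. Hence the map $T_i(Q_{\widehat S_i}):=(H_iQ)_{D_i}$ is well defined and satisfies $T_i(Q_{\widehat S_i})=[H(Q)]_{D_i}$ for all $Q$. This is exactly the compatibility condition required by the theorem. By its definition, $\widetilde H_i$ equals $T_i$ on $D_i$ and the identity off $D_i$, so it coincides with the lifted operator $\widetilde T_i$.

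Applying Theorem~\ref{thm:adv-ch} then yields $\hat H(Q)=(1-\frac1m)Q+\frac1m H(Q)$. Because $m\ge1$, the fixed-point equation $Q=\hat H(Q)$ rearranges to $\frac1m(Q-H(Q))=0$, which is equivalent to $Q=H(Q)$. Since $H$ is a $\gamma$-contraction in $\|\cdot\|_\infty$, its fixed point is unique, so $\hat H$ has exactly the same unique fixed point $Q^\star$.

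The only delicate point is the vanishing of the terms with $s'\notin S_i$. This uses nonnegativity of the transition probabilities together with their summing to one. For a self-contained argument, the identity can also be checked directly rather than through the theorem: on $D_i$ exactly one lifted operator, namely $\widetilde H_i$, acts nontrivially and returns $HQ$, while the other $m-1$ operators return $Q$. Averaging the $m$ lifted operators therefore gives $\frac1m HQ+\frac{m-1}{m}Q$ on each coordinate.
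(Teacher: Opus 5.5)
Your proposal is correct and follows the paper's route: the paper presents this corollary as a direct specialization of Theorem~\ref{thm:adv-ch}, with cores $C_i\times\mathcal A$, halos $S_i\times\mathcal A$, and the successor condition $P(S_i\mid s,a)=1$ supplying compatibility. The coordinatewise averaging you describe at the end (on $D_i$ only $\widetilde H_i$ acts, the other $m-1$ return $Q$) is exactly the paper's proof of that theorem.
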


The corollary highlights why Core-Halo is less restrictive than strict decomposition. Strict decomposition requires $P(C_i\mid s,a)=1$ for every state-action pair in $C_i$, which forbids all cross-core transitions. Core-Halo only requires $P(S_i\mid s,a)=1$: successors may leave the core, but they must remain inside the read-only halo. Hence the Bellman backup can use boundary information without assigning ownership of those boundary states to multiple agents.

\paragraph{Quantifying the strict-decomposition gap.}
We next use a deterministic gridworld to isolate the boundary effect in a setting where the geometry is simple enough to characterize exactly. The purpose of this example is not to restrict the phenomenon to grids, but to obtain a clean lower bound: square partitions create artificial boundaries whose number can be counted explicitly, and strict decomposition drops precisely the continuation values across those boundaries. Specifically, let \(Q^\star\) be the optimal action-value function satisfying \(Q^\star=HQ^\star\). For any candidate decentralized operator \(\mathcal K\), a necessary condition for preserving the original Bellman equation is \(\mathcal KQ^\star=Q^\star\). We therefore measure structural inconsistency by
\[
\operatorname{Dev}(\mathcal K)
:=
\frac{1}{4N}
\sum_{s\in \mathcal S}\sum_{a\in\mathcal A}
\left[
(\mathcal KQ^\star)(s,a)-Q^\star(s,a)
\right]^2 .
\]

Consider a deterministic \(n\times n\) gridworld with \(N=n^2\) states, four actions, discount \(\gamma\in(0,1)\), deterministic successor map \(f\), and one absorbing target \(t\) with \(f(t,a)=t\). We set \(Q(t,a)=0\) and write \(V_Q(s)=\max_{a\in\mathcal A}Q(s,a)\). Rewards are successor-state based: moving into \(t\) gives \(R_T>0\), moving into a trap \(z\in Z\subset\mathcal S\setminus\{t\}\) gives \(-R_Z(z)\), and all other transitions give zero. Thus
\[
(HQ)(s,a)=r(s,a,f(s,a))+\gamma V_Q(f(s,a)).
\]
Partition the grid into \(m=q^2>1\) equal square cores \(C_1,\ldots,C_m\), and let \(i(s)\) denote the core containing \(s\). The assembled strict-decomposition Bellman operator is
\[
(\mathcal H^{\rm hm}Q)(s,a)
=
r(s,a,f(s,a))
+
\gamma \mathbf 1\{f(s,a)\in C_{i(s)}\}V_Q(f(s,a)).
\]
Hence strict decomposition agrees with \(H\) on within-core transitions, but discards the continuation value whenever a transition crosses an artificial core boundary. Assume every non-target state has a trap-free path to \(t\) of length at most \(D_T\).

\begin{proposition}[Performance decay of strict-decomposition operator]
\label{prop:perform-sd}
Under the grid environment stated above,
\[
\operatorname{Dev}(\mathcal H^{\rm hm})
\ge
\frac{[\sqrt N(\sqrt m-1)-2]_+}{N}
\gamma^{2D_T}R_T^2,
\qquad
[x]_+:=\max\{x,0\}.
\]
In particular, for fixed \(N,\gamma,R_T,D_T\), the lower bound on the error induced by strict decomposition is nondecreasing in \(m\).
\end{proposition}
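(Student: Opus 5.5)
Since $Q^\star=HQ^\star$ and $\mathcal H^{\rm hm}$ agrees with $H$ except for the dropped continuation term, for every pair $(s,a)$ we have
\[
(\mathcal H^{\rm hm}Q^\star)(s,a)-Q^\star(s,a)=-\gamma\,\mathbf 1\{f(s,a)\notin C_{i(s)}\}\,V^\star(f(s,a)),
\]
where $V^\star=V_{Q^\star}$. Hence
\[
\operatorname{Dev}(\mathcal H^{\rm hm})=\frac{\gamma^2}{4N}\sum_{(s,a):\,f(s,a)\notin C_{i(s)}}V^\star(f(s,a))^2 .
\]
Only boundary-crossing pairs contribute, and each contributes at least $\gamma^2 V^\star(s')^2$ with $s'=f(s,a)$. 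The plan is: (i) lower bound $V^\star(s')$ uniformly for states that are not the target; (ii) count the crossing pairs whose successor is not the target $t$; (iii) combine.

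For step (i), fix a non-target $s'$ and a trap-free path to $t$ of length $L\le D_T$. The action sequence following this path gives zero reward until the final step into $t$, which pays $R_T$, and afterwards $Q(t,\cdot)=0$. This policy's value from $s'$ is $\gamma^{L-1}R_T\ge\gamma^{D_T-1}R_T$. Optimality gives $V^\star(s')\ge \gamma^{D_T-1}R_T>0$, so $V^\star(s')^2\ge\gamma^{2D_T-2}R_T^2$. Multiplying by the $\gamma^2$ prefactor yields the factor $\gamma^{2D_T}R_T^2$ in the claimed bound. Trap penalties do not interfere, because the path is trap-free and we only need a lower bound on $V^\star$.

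For step (ii), I would count ordered crossing pairs. With $q=\sqrt m$ cores per side, there are $q-1$ internal vertical cut lines and $q-1$ horizontal ones. Each line has length $n=\sqrt N$ and separates $n$ adjacent cell pairs. Each adjacent pair gives two ordered crossings $(s,a)$, one in each direction. So there are $4n(q-1)=4\sqrt N(\sqrt m-1)$ crossing state-action pairs in total. We discard those whose successor is $t$. The target has at most four neighbors, so at most four crossings enter $t$; depending on convention, $t$'s own self-loops never cross. Counting more carefully, it suffices to use one crossing direction per adjacent pair and drop the at most two pairs into $t$ that matter. The target is what produces the $-2$; to get exactly $[\sqrt N(\sqrt m-1)-2]_+$, I would count only $\sqrt N(\sqrt m-1)$ pairs in one orientation (e.g., each cut line counted once in one direction, halving the $4n(q-1)$) and remove at most $2$ whose successor is $t$. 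This gives at least $[\sqrt N(\sqrt m-1)-2]_+$ contributing pairs.

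For step (iii), combining the pieces gives
\[
\operatorname{Dev}\ge\frac{1}{4N}\cdot 4[\sqrt N(\sqrt m-1)-2]_+\,\gamma^{2D_T}R_T^2 .
\]
The factor of $4$ is recovered by keeping all four ordered crossings per adjacency (two directions per adjacent pair, summed over both families of cut lines) and subtracting the at most $8$ crossings that involve $t$. I would arrange the bookkeeping so that, after dividing by $4N$, the bound reads exactly as stated. Monotonicity in $m$ is then immediate, since $\sqrt m-1$ and $[\cdot]_+$ are nondecreasing.

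The main obstacle is this counting step: matching the constants exactly, including the $-2$ and the $1/(4N)$ normalization, and handling cores whose boundaries touch $t$. I expect only a loose crossing count to be needed, provided it is at least $4[\sqrt N(\sqrt m-1)-2]_+$.
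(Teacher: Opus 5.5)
Your argument is correct and is essentially the paper's proof. You use the exact defect $-\gamma V^\star(f(s,a))$ on core-crossing pairs and zero elsewhere, the trap-free-path bound $V^\star\ge\gamma^{D_T-1}R_T$, and the count of at least $[4\sqrt N(\sqrt m-1)-8]_+$ useful crossings: the $4n(q-1)$ geometric crossings minus at most four out of the absorbing $t$ and at most four into it. Dividing by $4N$ gives the stated constant. Drop the one-orientation, subtract-$2$ detour in step (ii): it loses a factor of $4$, and $n(q-1)$ is a quarter, not half, of $4n(q-1)$. State the full $4n(q-1)-8$ count from step (iii) directly.
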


The proof is in Appendix~\ref{subsec:prop3}. Proposition~\ref{prop:perform-sd} makes the boundary mechanism explicit: increasing the number of square partitions increases the number of artificial boundary actions, and each useful cross-boundary action loses a continuation value that appears in the original Bellman backup. Notably, the same qualitative mechanism should occur on more general topologies whenever a partition cuts transitions whose successor states have nonzero value: strict decomposition removes those successor values, while a suitable halo keeps them as read-only context. 
\section{Empirical validation}
\label{sec: empirical validation}
In the experiment part, we would like to answer the following three Research Questions (RQs):

{\bfseries\itshape\ul{RQ1: Does strict decomposition decentralization introduce structural bias?}}
This question tests the failure mode identified by our theory. When a local update depends on variables outside the owned block, strict decomposition removes those dependencies from the target construction. We therefore examine whether strict local updates converge to, or perform like, a boundary-truncated surrogate rather than the original global problem.

{\bfseries\itshape\ul{RQ2: Can Core-Halo recover the original global fixed-point problem?}}
This question tests the main mechanism of Core-Halo. By keeping update ownership disjoint but allowing read-only access to boundary variables, Core-Halo should preserve the information needed to evaluate each local block of the original operator. We evaluate whether this added context removes the structural gap observed under strict decomposition.

{\bfseries\itshape\ul{RQ3: Is the Core-Halo advantage general across decentralized SA problems?}}
This question tests whether the benefit is tied to the proposed decomposition principle rather than to a single hand-crafted example. We therefore study problems with different fixed-point operators, dependency structures, and learning algorithms, including tabular Bellman updates, graph-linear PageRank, SARSA energy management, and DQN traffic control.

To answer \textbf{RQ3}, we evaluate the proposed framework across several decentralized learning and control settings, with additional scenarios discussed in Appendix~\ref{app:other}. Besides, all experiments were carried out on a workstation equipped with an NVIDIA GeForce RTX 4090 GPU (24GB VRAM), an Intel Core i9-13900K CPU.

\begin{wrapfigure}{r}{0.48\textwidth}
    \vspace{-40pt}
    \centering
    \includegraphics[width=0.47\textwidth]{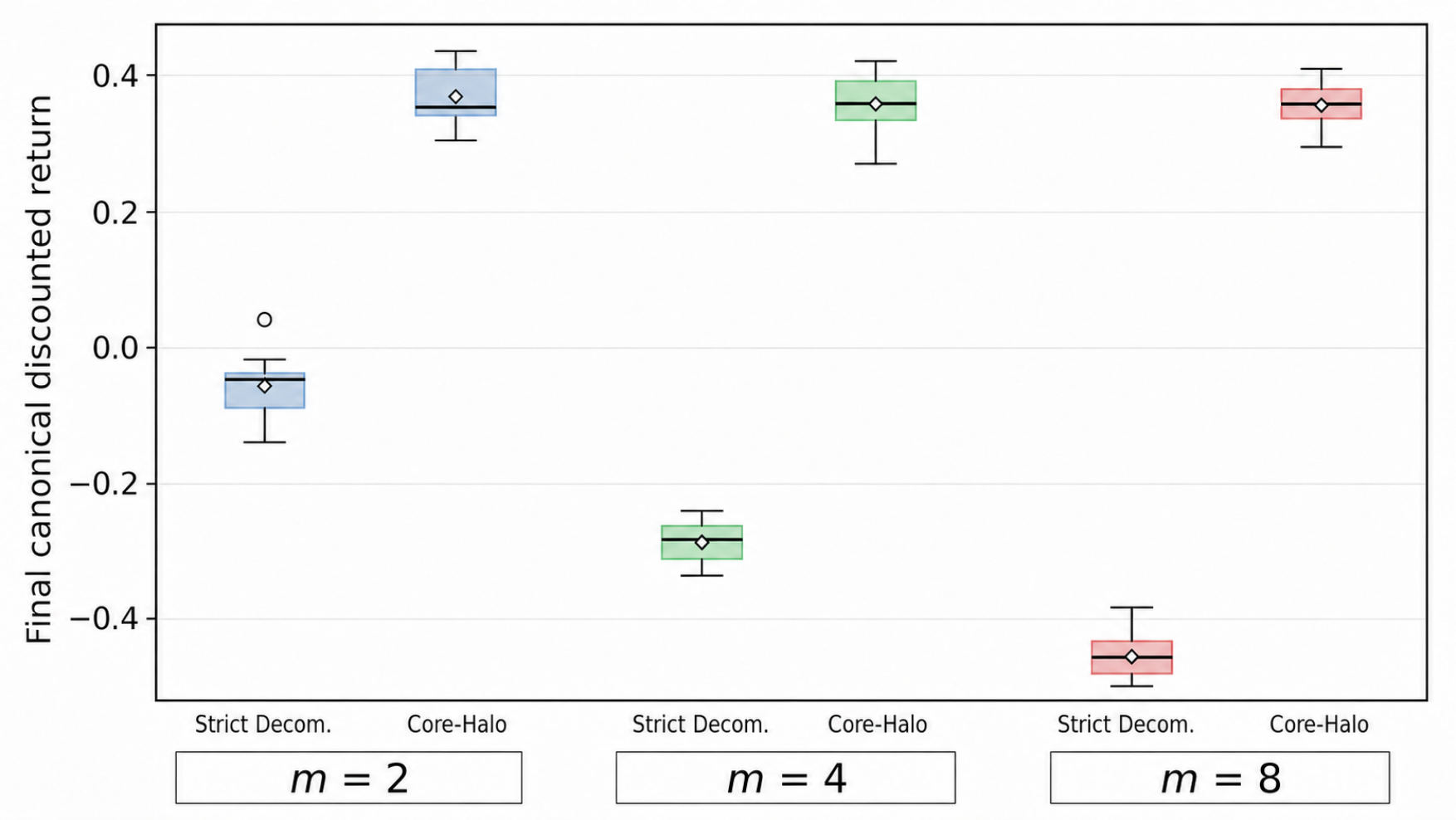}
    \caption{
    Return on the \(24\times 24\) MiniGrid.
    }
    \label{fig:minigrid-return}
    \vspace{-15pt}
\end{wrapfigure}
\subsection{Toy example: MiniGrid navigation}
\label{sec:exp-minigrid}
We first use a controlled MiniGrid navigation task \cite{chevalier2023minigrid} to illustrate the boundary effect predicted by our theory. The goal is to test (\textbf{RQ1}) whether strict local decomposition becomes less reliable as the number of partitions increases, and (\textbf{RQ2}) whether the proposed Core-Halo construction remains stable by preserving one-step cross-boundary evaluation context.

We construct a \(24 \times 24\) MiniGrid environment with symbolic tabular states. The state is the agent's grid position, and the action space consists
of local movement actions.  The environment contains a fixed goal, traps near
partition boundaries, and a per-step cost.  Boundary decisions are
important as the value of an action near the partition boundary may
depend on the value in a neighboring region.

For each \(m \in \{2,4,8\}\), we partition the grid into \(m\) rectangular
cores and assign one agent to each core.  Agents communicate only with spatially
adjacent regions using Metropolis weights.  We then compare the performance of strict decomposition and Core-Halo.  Thus ownership
remains disjoint, while the Bellman backup has access to the local context
needed at boundaries. All methods use the same decentralized tabular Q-learning protocol. Each agent
performs \(H=4\) local updates between communication rounds, and all
hyperparameters and total environment-step budgets are matched across methods.
Each configuration is run with \(10\) random seeds.  At evaluation time, we use
the averaged Q-function \(\bar Q^k = m^{-1}\sum_{i=1}^m Q_i^k\) and report the
canonical discounted return of the greedy policy under a fixed evaluation start
distribution.

Figure~\ref{fig:minigrid-return} shows the final canonical return.  As the
number of partitions increases, Strict Decomposition degrades substantially,
consistent with the fact that more partitions create more artificial boundaries
where continuation values are discarded.  In contrast, Core-Halo maintains similar performance across \(m=2,4,8\), and consistently outperforms strict decomposition at the same partition count.  This supports the central
claim that overlap is useful not as shared ownership, but as read-only context
for preserving correct Bellman backups near boundaries.

\subsection{PageRank}
We also implement the Core-Halo framework on personalized PageRank \cite{Page1998PageRank}. Given a row-stochastic matrix $P\in\mathbb{R}^{n\times n}$ and query node $u$, the PageRank vector satisfies
\[
x^\star=\alpha P^\top x^\star+(1-\alpha)e_u .
\]
With $\bar F(x):=\alpha P^\top x+(1-\alpha)e_u$, the map is an $\alpha$-contraction in $\ell^1$. We partition the nodes into disjoint cores $\{D_i\}_{i=1}^m$ and define the predecessor halo
\[
S_i:=D_i\cup\{\,k:\exists j\in D_i,\; P_{kj}>0\,\}.
\]
Then $[\bar F(x)]_{D_i}$ only depends on coordinates in $S_i$, so the locality condition in Definition \ref{def:Core-Halo} holds. Hence the lifted local operators satisfy $[\widetilde T_i(x)]_{D_i}=[\bar F(x)]_{D_i}$, and Theorem~\ref{lem:ch1} implies that the Core-Halo construction preserves the original PageRank fixed point. In the experiment, we construct a synthetic stochastic block model graph, row-normalize its adjacency matrix to obtain $P$, set $\alpha=0.85$, and compare three recursions from $x=0$: random single-agent block updates, parallel Core-Halo updates, and a strict decomposition that drops cross-core predecessors.
\begin{figure}[h]
  \centering
    \begin{subfigure}[t]{0.48\linewidth}
    \centering
    \includegraphics[height=4cm, width=6cm]{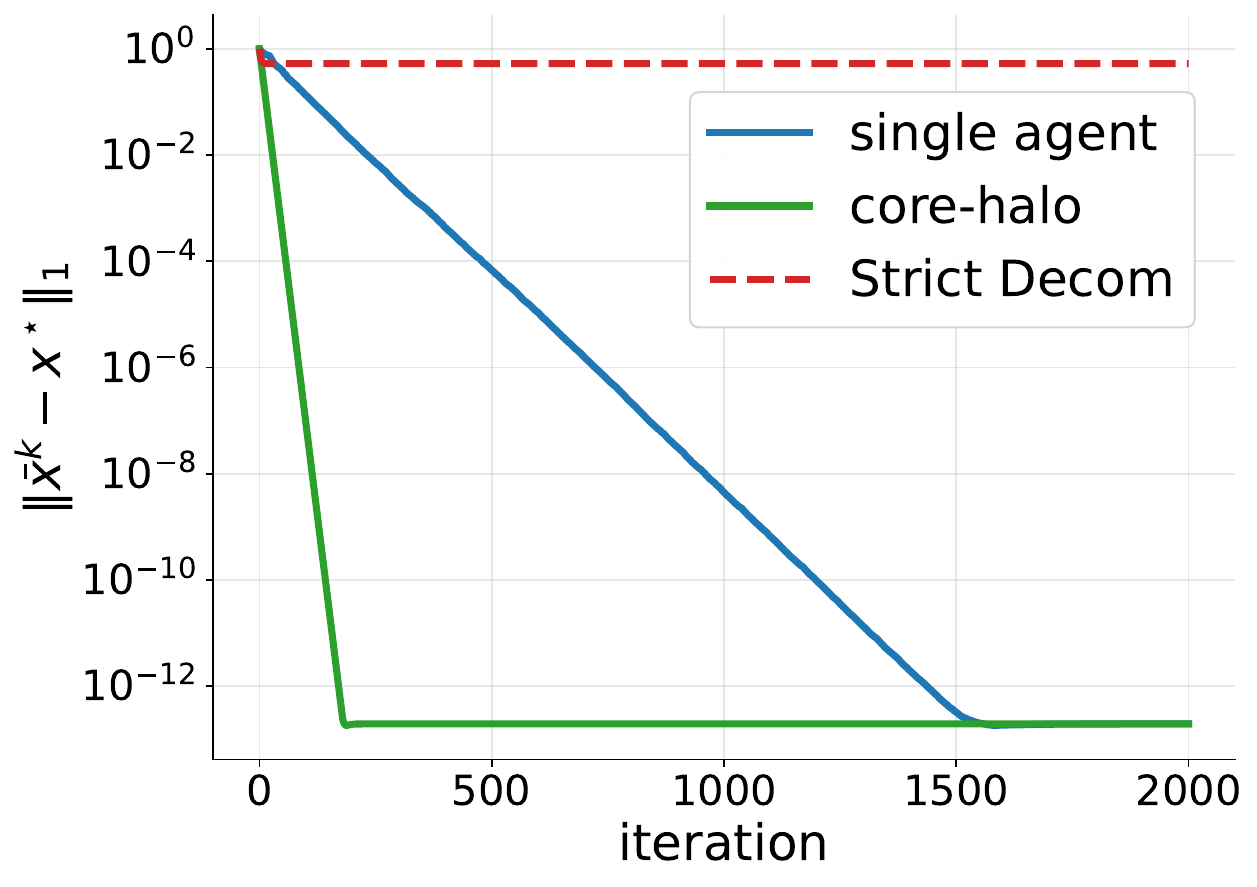}
    \caption{Convergence of the PageRank error $\|\bar x^k-x^\star\|_1$ over iterations.}
    \label{fig:pr_1}
  \end{subfigure}\hfill
  \begin{subfigure}[t]{0.48\linewidth}
    \centering
    \includegraphics[height=4cm, width=6cm]{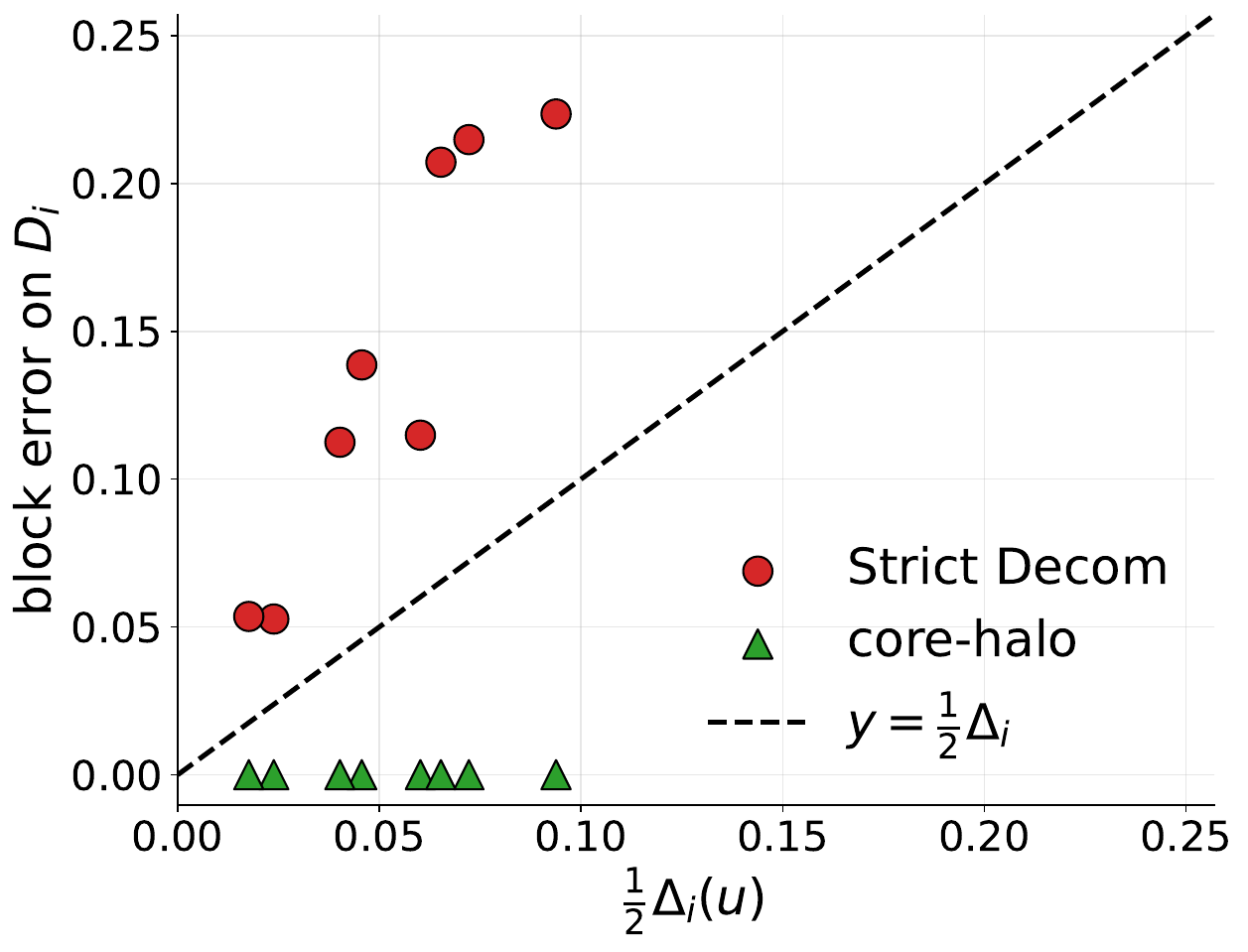}
    \caption{Empirical strict decomposition  block error versus the lower-bound scale $\tfrac12\Delta_i(u)$ across agents.}
    \label{fig:pr_2}
  \end{subfigure}
  \caption{Core-Halo preserves the original PageRank fixed point and converges rapidly, while strict decomposition incurs structural bias from dropping cross-core predecessor dependencies.}\vspace{-10pt}
\label{fig:pagerank}
\end{figure}

Figure~\ref{fig:pagerank} plots the PageRank error $\|\bar x^k-x^\star\|_1$ and the blockwise structural error across agents. Figure~\ref{fig:pr_1} shows that strict decomposition stays at a nonzero error plateau, indicating that dropping cross-core predecessor dependencies change the fixed point. This  answers \textbf{RQ1} by demonstrating the structural bias introduced by strict local truncation. In contrast, Core-Halo and single-agent both converge to the same numerical floor, showing that Core-Halo preserves the original PageRank fixed point, which answers \textbf{RQ2}. Besides, Core-Halo converges much faster because it updates all cores in parallel and behaves like centralized power iteration with rate about $\alpha=0.85$, while single-agent refreshes only one of $m$ blocks per round and contracts at the slower effective rate $1-\frac{1-\alpha}{m}$. Besides, Figure~\ref{fig:pr_2} further validates Proposition \ref{thm:strict decomposition bias} and Theorem \ref{thm:adv-ch}: on top of the nonzero plateau , the empirical strict-decomposition block errors lie above the scale $\tfrac12\Delta_i(u)$ across agents, illustrating the irreducible bias caused by strict local truncation. But Core-Halo can directly mitigate this issue.

\subsection{SARSA: Decentralized Energy Management in Smart Grids}
We evaluate Core-Halo on decentralized smart-grid energy management using tabular SARSA on the IEEE 9-, 14-, and 30-bus systems. Each bus $i$ is an aggregator controlling a battery with action $a_{i,t}\in[-1,1]$ and energy state $e_{i,t}\in[0,1]$, updated by
\[
e_{i,t+1}=\min\{\max(e_{i,t}+a_{i,t},0),1\}.
\]
The local cost combines electricity payment and a neighborhood overload penalty:
\[
c_{i,t}
=
p_{i,t}(a_{i,t}\overline E_i+\ell_{i,t})
+
\lambda\Big[
\sum_{j\in N(i)}(a_{j,t}\overline E_j+\ell_{j,t})-T_i
\Big]_+,
\]
where $N(i)$ denotes the grid-neighborhood of bus $i$. The reward is $r_t=-\sum_i c_{i,t}$, so bounded price and load profiles imply bounded rewards. For a fixed policy $\pi$, tabular SARSA is a stochastic-approximation realization of the Bellman fixed point $Q^\pi=H^\pi Q^\pi$.

This setting directly instantiates our framework because the update for bus $i$ is not determined by its own state alone: the overload term depends on neighboring buses through $N(i)$. Thus strict decomposition, which uses only the local state of bus $i$, replaces the original Bellman operator by a truncated local surrogate. Core-Halo keeps disjoint ownership of local $Q$-table updates, but augments the local state with read-only neighbor battery information, providing the boundary context needed to evaluate the neighborhood-dependent cost. We compare centralized SARSA, strict-decomposition SARSA, and Core-Halo SARSA. Details are deferred to Appendix~\ref{supp_sarsa}.
\begin{figure}[h]
\centering
\includegraphics[width=0.9\linewidth]{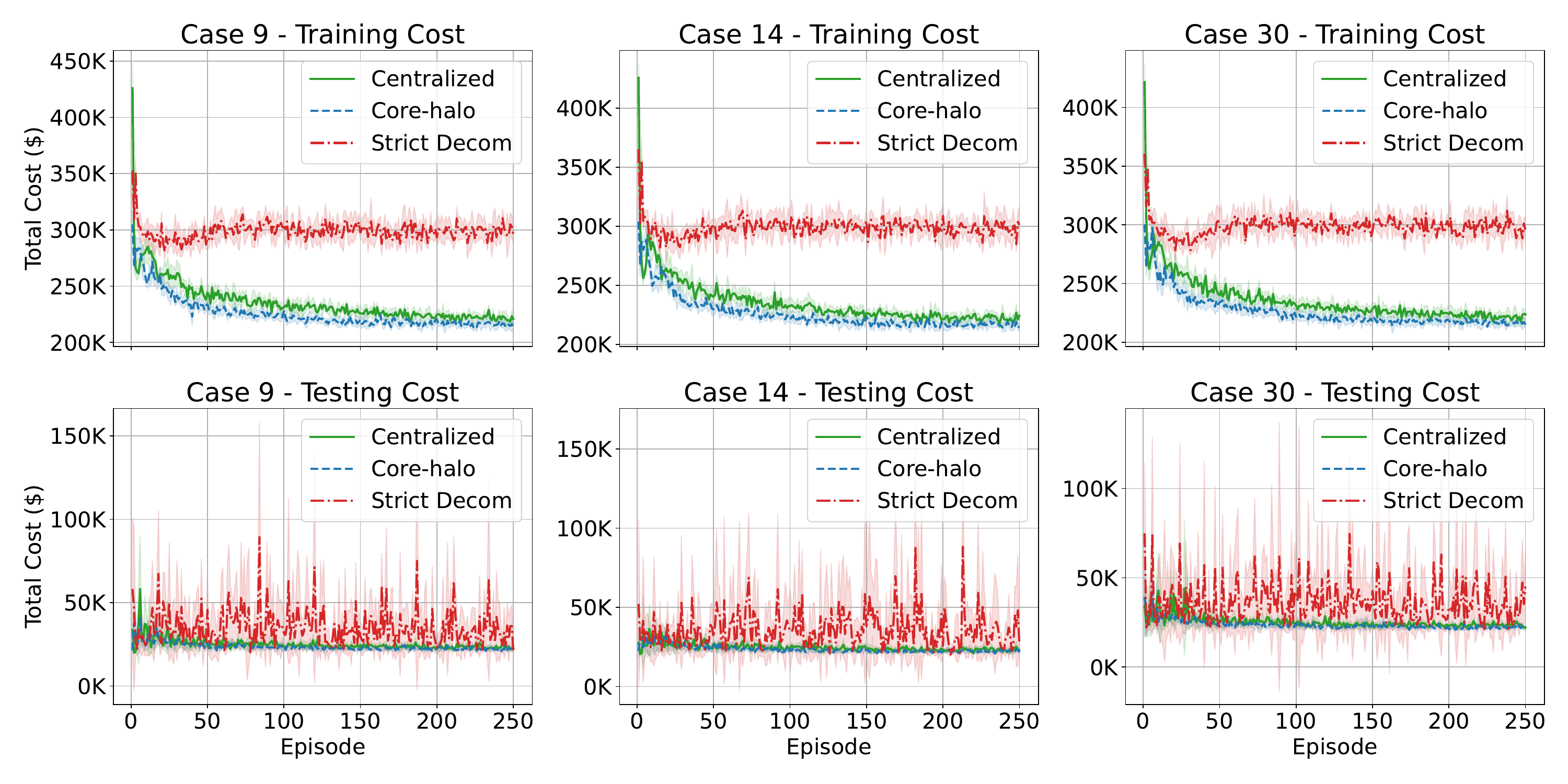}
\caption{Learning curves detailing the total grid cost trajectories across the IEEE 9, 14, and 30 bus.}
\label{fig:ieee-fig}\vspace{-10pt}
\end{figure}
The evaluation results shown in Figure~\ref{fig:ieee-fig} and summarized in Table~\ref{tab:ieee-eval} demonstrate a clear structural advantage for the Core-Halo decomposition. The independent strict-decomposition agents incur a strictly positive and irreducible performance gap, which answers \textbf{RQ1}. Due to learning with only self-observations, they truncate the cross-boundary dependencies required to anticipate and avoid the neighborhood demand penalty. Conversely, the Core-Halo framework successfully matches the optimal cost and achieves the same best rewards as the central planner. By augmenting the local update with neighborhood demand, the decentralized agents preserve the original global fixed-point problem without discarding essential boundary information, which answers \textbf{RQ2}.

\subsection{Deep Q-Network: Multi-Intersection Traffic Control}\label{sec:traffic-control} 

We evaluate Core-Halo on a multi-intersection traffic-control task using
DQN~\cite{mnih2015human,wei2019presslight,wei2019colight}. This experiment tests
\textbf{RQ1} and \textbf{RQ2} in a setting with nonlinear function approximation and
physical coupling across intersections. For a nonterminal transition, the DQN target is in the form of $y_t := r_t+\gamma \max_{a'\in\mathcal A} Q_{\theta^-}(s_{t+1},a')$, where \(Q_{\theta^-}\) is the target network. Conditioning on the full state-action pair,
\begin{equation}\label{eq:dqn-mean}
\mathbb E\!\left[y_t | s_t=s, a_t=a\right] = (H Q_{\theta^-})(s,a),
\end{equation}
with \(H\) denoting the Bellman optimality operator. The behavior policy affects the
sampling distribution of \((s_t,a_t)\), but not the conditional backup in \eqref{eq:dqn-mean}. Thus strict decomposition tests \textbf{RQ1} by forming local targets from only the owned intersections, which can discard cross-boundary traffic dependencies. Core-Halo tests \textbf{RQ2} by keeping the same disjoint update ownership while adding one-hop neighboring intersections as read-only context. Centralized DQN uses the full joint observation and serves as a full-information reference, not as a theoretical upper bound.
\begin{wraptable}{r}{0.5\linewidth}
\centering
\small\vspace{-10pt}
\caption{Comparison on the traffic environments.}
\label{tab:traffic-eval}
\resizebox{0.5\textwidth}{!}{
\begin{tabular}{cccc}
\toprule
\textbf{Env} & \textbf{Algorithm} 
& \textbf{Waiting (s)} ($\downarrow$) 
& \textbf{Throughput (veh/h)} ($\uparrow$) \\
\midrule
\multirow{3}{*}{Barrier ring}
& \cellcolor{gray!15}Centralized 
& \cellcolor{gray!15}15.78 $\pm$ 0.23
& \cellcolor{gray!15}1347.87 $\pm$ 1.51 \\
& Strict decom
& 16.35 $\pm$ 0.33 
& 1347.60 $\pm$ 2.12 \\
& Core-Halo
& 16.08 $\pm$ 0.23 
& 1346.27 $\pm$ 1.15 \\
\midrule
\multirow{3}{*}{DCB}
& \cellcolor{gray!15}Centralized 
& \cellcolor{gray!15}131.17 $\pm$ 2.10
& \cellcolor{gray!15}1057.07 $\pm$ 2.54 \\
& Strict decom
& 139.84 $\pm$ 11.65 
& 1045.33 $\pm$ 13.77 \\
& Core-Halo 
& 133.15 $\pm$ 2.16 
& 1055.07 $\pm$ 5.22 \\
\bottomrule
\end{tabular}\vspace{-20pt}
}
\end{wraptable}
We evaluate two SUMO topologies described in Appendix~\ref{supp_dqn}: a seven-intersection barrier ring partitioned into groups \([2,3,2]\), and a
fourteen-intersection Double-Corridor Bridge (DCB) with group structures \([2,3,2]\) and \([3,2,2]\). Vehicles enter and exit through traffic-light stubs under the same arrival process for all methods. Each method is trained with three random seeds and evaluated on five deterministic episodes per seed. We report mean waiting time and throughput. The one-hop halo should be interpreted as a practical approximation of Theorem~\ref{lem:ch1}. 

Figure~\ref{fig:training-curves} and Table~\ref{tab:traffic-eval} show that strict decomposition is less stable and gives worse waiting-time than the full-information reference on both topologies, supporting \textbf{RQ1}. The gap is especially large on the double-corridor bridge, where cross-boundary interactions are more pronounced. Core-Halo closes most of this gap using only one-hop read access, supporting the practical analogue of \textbf{RQ2}. Overall, the results support the paper's central message: in decentralized Bellman-target learning, separating update ownership from evaluation context can substantially improve performance even when the exact Core-Halo locality condition is only approximately satisfied.

\begin{figure}[ht]
  \centering
  \begin{subfigure}[t]{0.48\linewidth}
    \centering
    \includegraphics[height=4cm, width=7cm]{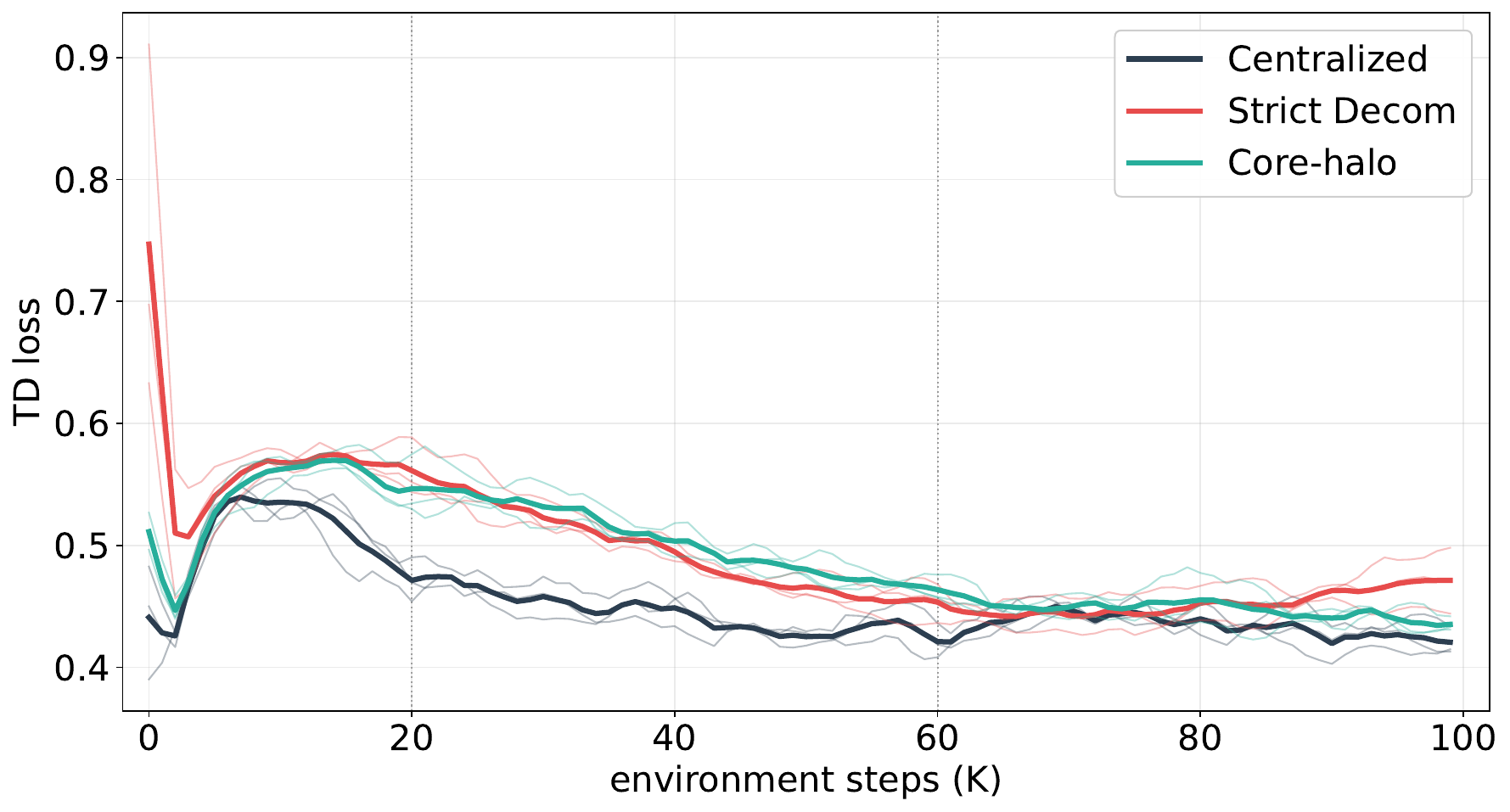}
    \caption{TD loss during training on the barrier-ring environment.}
    \label{fig:td-loss-gradient}
  \end{subfigure}\hfill
  \begin{subfigure}[t]{0.48\linewidth}
    \centering
    \includegraphics[height=4cm, width=7cm]{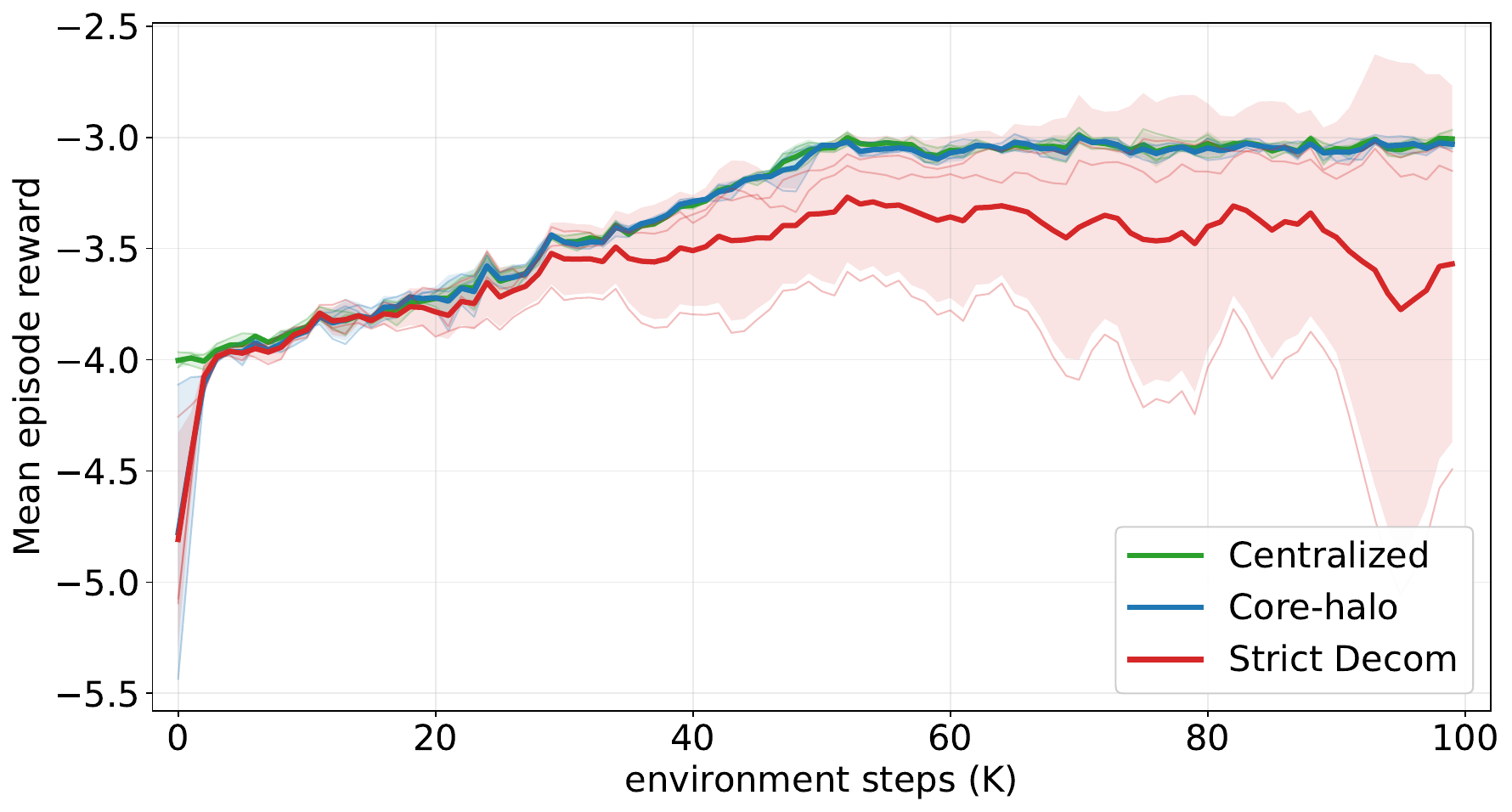}
    \caption{Mean episode reward during training on the
    double-corridor-bridge environment.}
    \label{fig:dcb-reward-multiseed}
  \end{subfigure}
  \caption{Traffic-Control Training Dynamics.}
  \label{fig:training-curves}
\end{figure}

\section{Conclusions}
We introduced Core-Halo decomposition for solving large-scale fixed-point problem. The central message is that decentralization must preserve the mean operator, not merely distribute computation: strict local decomposition can truncate cross-boundary dependencies and might converge to a biased surrogate. Core-Halo separates disjoint update ownership from read-only evaluation context, and under a locality condition exactly preserves the original fixed points. The bias bounds, Bellman gridworld analysis, and experiments on MiniGrid, PageRank, smart grids, and traffic control support this perspective. The potential limitations might include, more downstream applications and corresponding specific Core-Halo decompositions might need exploration, and in terms of theoretical aspect, there is lack of theoretical convergence analysis for DSA together with Core-Halo. Future work could also include adaptive halo selection, communication-privacy tradeoffs, and theory for approximate halos with function approximation.

\bibliographystyle{plain}
\bibliography{main}

\clearpage

\appendix

\section{Related Works}
\label{app:related-work}

\textbf{Stochastic approximation, Bellman fixed points, and implicit fixed-point models.}
The fixed-point viewpoint adopted here builds on the classical stochastic
approximation literature initiated by Robbins and Monro
\cite{robbins1951stochastic} and developed through the ODE and recursive
analyses of Kushner and Yin \cite{kushner2003stochastic} and Borkar
\cite{borkar2008stochastic}. In reinforcement learning, Bellman's dynamic
programming principle \cite{bellman1957dynamic} and the
stochastic-approximation interpretation of TD and value iteration give the
template for Bellman fixed-point algorithms; Watkins and Dayan
\cite{watkins1992q} introduced Q-learning, and Tsitsiklis
\cite{tsitsiklis1994asynchronous} together with Jaakkola, Jordan, and Singh
\cite{jaakkola1994convergence} established convergence for asynchronous
stochastic approximation and iterative dynamic programming. The same
mean-operator perspective covers TD learning \cite{sutton1988learning},
SARSA \cite{rummery1994line}, Expected SARSA \cite{vanseijen2009expected},
and finite-sample analyses of linear SA and TD learning
\cite{bhandari2018finite,srikant2019finite}. Beyond RL, PageRank is a
contractive graph fixed-point problem \cite{Page1998PageRank,gleich2015pagerank},
and implicit neural architectures such as deep equilibrium models, implicit
deep learning, monotone operator equilibrium networks, and Jacobian-free
implicit training also formulate learning as fixed-point solving
\cite{bai2019deq,ghaoui2021implicit,winston2020mondeq,fung2022jfb}.
Our work uses this fixed-point language but asks a different question: when
the global operator is implemented by decentralized local updates, does the
local construction preserve the original fixed-point equation, or has it
silently been replaced by a boundary-truncated surrogate?

\textbf{Decentralized stochastic approximation and multi-agent reinforcement learning.}
There are two relevant lines of prior work. The first is decentralized SA
itself: distributed fixed-point computation goes back to asynchronous
distributed methods \cite{bertsekas1983distributed} and the parallel-and-
distributed-computation framework of \cite{bertsekas1989parallel}, with early
analyses for communicating or consensus-coupled agents
\cite{kushner1987asymptotic,stankovic2011decentralized,stankovic2016distributed}.
The communication layer is typically a consensus or gossip protocol — fast
linear averaging \cite{xiao2004fast}, randomized gossip
\cite{boyd2006randomized}, distributed subgradient methods
\cite{nedic2009distributed}, ADMM \cite{boyd2011distributed}, or distributed
randomized PageRank \cite{ishii2010distributed}. In decentralized RL, \cite{kar2013qd} studied consensus-plus-innovation QD-learning,  \cite{zhang2018fully} analyzed fully decentralized learning with
networked agents, \cite{doan2019finite} and 
\cite{sun2020finite} gave finite-time analyses for distributed TD,  \cite{wang2020decentralized} developed decentralized TD tracking, \cite{zeng2023finite} established finite-time theory for
decentralized SA with fixed points. \cite{zhang2021decentralized} surveys
the broader area. The second line is the multi-agent RL benchmarks and
baselines used in our experiments: independent Q-learning \cite{IQL},
decentralized learning via local economic transactions
\cite{chang2020decentralized}, PettingZoo/MPE \cite{terry2021pettingzoo},
centralized-training/decentralized-execution methods such as MADDPG and
counterfactual policy gradients \cite{lowe2017multi,foerster2018counterfactual},
and value-factorization methods such as VDN and QMIX
\cite{sunehag2018value,rashid2018qmix}. Both lines take the local update
model as given and study communication, sample sharing, nonstationarity, or
training structure on top of it; our paper is complementary in that it
identifies a structural failure mode of strict decomposition itself and
gives a Core-Halo construction that preserves the intended global
fixed point.

\textbf{Locality, overlapping decompositions, sparse control, and boundary-aware evaluation.}
The distinction between a disjoint owned region and an overlapping context
region is related in spirit to domain decomposition and overlapping Schwarz
methods for numerical PDEs, where subproblems exchange boundary information
to recover the behavior of a larger coupled system
\cite{lions1988schwarz,smith1996domain,quarteroni1999domain,toselli2005domain}.
Similar locality ideas appear in sparse optimal control and MPC: splitting
methods exploit temporal and subsystem sparsity \cite{odonoghue2013splitting},
OSQP solves sparse QPs through operator splitting \cite{stellato2020osqp},
modern MPC texts formalize the fixed-point and optimization structure of
predictive control \cite{rawlings2017model}, and distributed or hierarchical
MPC reviews emphasize coupling constraints and neighboring subsystems
\cite{scattolini2009architectures,christofides2013distributed}; these ideas
underpin applications such as building and HVAC control \cite{serale2018mpc}.
In deep RL control, DQN made Bellman fixed-point learning practical with
function approximation \cite{mnih2015human}, and traffic-control systems
such as PressLight and CoLight show that multi-intersection policies depend
strongly on neighboring intersections \cite{wei2019presslight,wei2019colight}.
Our Core-Halo framework formalizes the overlap principle for decentralized
stochastic fixed-point problems: update ownership stays disjoint, evaluation
context may overlap, and under an explicit locality condition the resulting
decentralized recursion preserves the original global operator rather than
approximating an isolated local subproblem.

\section{Comparison between SA and DSA}

 Firstly, we provide details of Proposition~\ref{prop:linear-speedup}, in the spirit of theoretically comparing stochastic approximation (SA) and decentralized stochastic approximation (DSA). Then, we demonstrate the speedup of DSA over SA through numerical experiments in Appendix~\ref{sec:exp-dsa-speedup}. 

\subsection{Details of Proposition~\ref{prop:linear-speedup}}\label{sec:problem_set}

We consider $\bar F(x)$ with finite-sum structure:
\begin{equation}\label{eq:bar_F_sep}
\bar F(x) := \overm\summ \bar F_i(x), 
\end{equation}
with $\bar F_i(x) := \Eb_{\xi_i\sim \mu}[F_i(x, \xi_i)]$. 
Additionally, suppose: 
\begin{itemize}
    \item[(i)] There exists $L > 0$ such that for any $x, x^\prime \in \mathbb{R}^d$, $\xi \in \mathcal{X}$, $i\in[m]$ there is
$$
\max\{\norm{F_i(x, \xi) - F_i(x^\prime, \xi)}_2, \norm{F_i(x, \xi)-x - F_i(x^\prime, \xi)+x^\prime}_2\}\leq L\norm{x - x^\prime}_2.
$$    
    \item[(ii)]  There exists $\beta \in (0,1)$ such that for all $x, x^\prime \in\mathbb{R}^d$, $\norm{\bar F(x)  - \bar F(x^\prime)}_2 \leq \beta\norm{x-x^\prime}_2$.

    \item[(iii)] For any $\xi\in\mathcal{X}$, $\norm{F(x^\star, \xi)}_2 + \norm{x^\star}_2 \leq B$ for some $B > 0$.
\end{itemize}

Multiple agents can be applied to solve out the fixed point of \eqref{eq:bar_F_sep} over a connected and undirected network $\mathcal{G}$. The communication of the agents is conducted by gossip protocal \cite{ boyd2005gossip, boyd2006randomized} where each agent communicates only with its neighbors and updates by weighted averaging. Specifically, suppose  agent $i$ owns $x_i\in\mathbb{R}^d$, then one communication is by $X^{+} := WX$, where $X = [x_1, \cdots, x_m]^{\top} \in \mathbb{R}^{m\times d}$ and $W := (w_{ij})_{1\leq i,j\leq m}\in\mathbb{R}^{m\times m}$ is mixing matrix, satisfying:
\begin{itemize}
    \item[(i)] Each $w_{ij} \geq 0$, and $w_{ij} > 0$ if and only if $i$ and $j$ are adjacent in $\mathcal{G}$.
    \item[(ii)] $W^{\top} = W$,\quad $W\mathbf{1} = \mathbf{1}$, where $\mathbf{1}\in\mathbb{R}^m$ is all-one vector.
\end{itemize}
Denote $J := \frac{1}{m}\mathbf 1\mathbf 1^{\top}$ and  $\rho := \norm{W-J}_2$: the smaller $\rho$ is, the better quality the network is. 

Consider two algorithms,
 \begin{itemize}
     \item Stochastic Approximation (SA) with single agent: the index $i_k$ is drawn uniformly from $[m]$ and $\xi^k \sim\mu, x_{sg}^0 = 0$, 
\begin{equation}\label{example:sa_update}
    x_{sg}^{k+1} = x_{sg}^k + \alpha_1 (F_{i_k}(x_{sg}^k, \xi^k) - x_{sg}^k).
\end{equation}
\item Decentralized Stochastic Approximation (DSA) with network $\mathcal{G}$ and gossip matrix $W$: with i.i.d. sampling $\xi_i^k \sim \mu$, $x_i^0 = 0$,
\begin{equation}\label{example:dsa_update}
    \begin{aligned}
        & z_i^k = x_i^k + \alpha_2(F_i(x_i^k, \xi_i^k) - x_i^k),\quad\quad\forall i\in [m]\\
        & x_i^{k+1} = \sum_{j=1}^m w_{ij}z_j^k,\quad\quad\quad\forall i\in [m].
    \end{aligned}
\end{equation}
 \end{itemize}






We provide following key proposition on one-step recursive results for SA and DSA, under proper conditions on the network parameter $\rho$ and stepsizes $\alpha_1, \alpha_2 > 0$. By directly iterating these recursive inequalities from $k$ to $0$, we can derive Proposition~\ref{prop:linear-speedup} and omitted.

\begin{proposition}\label{prop:app_dsa_sa}
Choose $\alpha_1$ such that
$\alpha_1 \in \left(0, \min\left(
1,
\frac{2(1-\beta)}{(1-\beta)^2+2L^2},
\right)\right]$ 
and choose $\alpha_2 \in (0,1)$ such that  
\begin{equation}
\begin{aligned}
\left(1-\frac{\alpha_2(1-\beta)}{2}\right)^2
+
\frac{4\alpha_2^2L^2}{m}
 <1,\quad
\frac{4\alpha_2^2L^2}{3}
+
\frac{
2\alpha_2^3(1-\beta)\left(8-7\alpha_2(1-\beta)\right)L^2
}{
m\left(4-3\alpha_2(1-\beta)\right)^2
}
 <1.
\label{cond:alpha2-second}
\end{aligned}
\end{equation}
Assume further that the network parameter $\rho$ satisfies
\begin{align}
\rho^2
&\leq
\min\left(
\frac{
(1-\beta)^2
\left(8-7\alpha_2(1-\beta)\right)
\left(8-5\alpha_2(1-\beta)\right)
}{
64
\left(4-3\alpha_2(1-\beta)\right)^2
\left((L+\beta)^2+L^2\right)
},
\right.
\notag\\
&\qquad\qquad
\frac{
\alpha_2^2L^2
}{
6\left((1-\alpha_2+\alpha_2L)^2+\alpha_2^2L^2\right)
},
\left.
\frac{
\alpha_2(1-\beta)
\left(8-7\alpha_2(1-\beta)\right)
}{
8m\left(4-3\alpha_2(1-\beta)\right)^2
}
\right).
\label{cond:rho}
\end{align}
Run SA \eqref{example:sa_update} with stepsize $\alpha_1$ and DSA
\eqref{example:dsa_update} with stepsize $\alpha_2$. Then the generated
iterates $\{x_{sg}^k\}_k$ and $\{(x_i^k)_{i\in[m]}\}_k$ satisfy
\begin{equation}\label{ineq:x_sg}
\Eb\|x_{sg}^{k+1}-x^\star\|_2^2
\leq
\left((1-\alpha_1(1-\beta))^2+2\alpha_1^2L^2\right)
\Eb\norm{x_{sg}^k-x^\star}_2^2
+
2\alpha_1^2B^2.
\end{equation}
Moreover, with
\begin{align}
\tilde{\cL}^{k}
:=
\Eb\norm{\bar x^k-x^\star}_2^2
+
\frac{1}{m}\left(1+\frac{1}{\nu}\right)
\Eb\norm{(I-J)X^k}_F^2,
\end{align}
where  
$\nu
:=
\frac{
\alpha_2(1-\beta)\left(8-7\alpha_2(1-\beta)\right)
}{
16\left(1-\alpha_2(1-\beta)\right)^2
}$, then $\nu>0$ and  
\begin{equation}\label{ineq:tilde_L}
\begin{aligned}
&\tilde{\cL}^{k+1} \\
&\leq
\left(
\left(1-\frac{\alpha_2(1-\beta)}{2}\right)^2
+
\frac{4\alpha_2^2L^2}{m}
+
\frac{4\alpha_2^2L^2}{3}
+
\frac{
2\alpha_2^3(1-\beta)\left(8-7\alpha_2(1-\beta)\right)L^2
}{
m\left(4-3\alpha_2(1-\beta)\right)^2
}
\right)
\tilde{\cL}^k
+
\frac{5}{m}\alpha_2^2B^2.
\end{aligned}
\end{equation}
In particular, under above conditions for $\alpha_1, \alpha_2, \rho$, the coefficient in
\eqref{ineq:x_sg} is strictly smaller than one,  the coefficient before $\tilde{L}^k$ of \eqref{ineq:tilde_L} is also strictly smaller than
one.
\end{proposition}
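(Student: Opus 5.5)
For the SA bound \eqref{ineq:x_sg}, I expand the recursion around $x^\star$. Write $e^k := x_{sg}^k - x^\star$. Using $x^\star = \bar F(x^\star)$,
\[
e^{k+1} = (1-\alpha_1)e^k + \alpha_1\bigl(\bar F(x_{sg}^k)-\bar F(x^\star)\bigr) + \alpha_1\,\zeta^k,
\qquad
\zeta^k := F_{i_k}(x_{sg}^k,\xi^k) - \bar F(x_{sg}^k).
\]
The first two terms are deterministic given the past, and by (ii) their norm is at most $(1-\alpha_1(1-\beta))\|e^k\|$, since $\alpha_1\le 1$.

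The noise $\zeta^k$ has conditional mean zero, because $i_k$ is uniform and $\xi^k\sim\mu$ are independent of the past. Hence the cross term vanishes in expectation, and
\[
\mathbb E\|e^{k+1}\|^2 \le (1-\alpha_1(1-\beta))^2\,\mathbb E\|e^k\|^2 + \alpha_1^2\,\mathbb E\|\zeta^k\|^2 .
\]
To bound the noise, I use $\mathbb E\|\zeta\|^2\le \mathbb E\|F_{i_k}(x_{sg}^k,\xi^k)-x^\star\|^2$ (variance is at most the second moment about any point). Splitting $F_{i_k}(x_{sg}^k,\xi)-F_{i_k}(x^\star,\xi)$ and $F_{i_k}(x^\star,\xi)-x^\star$, assumption (i) controls the first piece and (iii) controls the second. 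The inequality $(a+b)^2\le 2a^2+2b^2$ then gives $2L^2\|e^k\|^2+2B^2$.

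This yields exactly the coefficient $(1-\alpha_1(1-\beta))^2+2\alpha_1^2L^2$ and the additive term $2\alpha_1^2B^2$. The stepsize restriction $\alpha_1\le 2(1-\beta)/((1-\beta)^2+2L^2)$ is precisely the condition for this coefficient to be at most $1$: it is equivalent to $\alpha_1\bigl((1-\beta)^2+2L^2\bigr)\le 2(1-\beta)$. Strictness needs a small check at the endpoint.

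For DSA, I track two quantities: the average $\bar x^k$ and the consensus error $(I-J)X^k$. Since $W$ is doubly stochastic,
\[
\bar x^{k+1} = \bar x^k + \frac{\alpha_2}{m}\sum_i \bigl(F_i(x_i^k,\xi_i^k)-x_i^k\bigr).
\]
I rewrite this as $(1-\alpha_2)\bar x^k + \alpha_2\bar F(\bar x^k)$ plus two perturbations:
\begin{itemize}
\item a heterogeneity/disagreement perturbation $\frac{\alpha_2}{m}\sum_i\bigl[\bar F_i(x_i^k)-x_i^k-\bar F_i(\bar x^k)+\bar x^k\bigr]$, bounded via (i) by $\alpha_2 L\,m^{-1/2}\|(I-J)X^k\|_F$;
\item an average of $m$ independent zero-mean noises, whose second moment is reduced by a factor $1/m$. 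This is the origin of the $4\alpha_2^2L^2/m$ and $B^2/m$ terms.
\end{itemize}
A Young inequality with a parameter tuned so that the contraction becomes $(1-\alpha_2(1-\beta)/2)^2$ handles the disagreement cross term.

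For the consensus part, I use $(I-J)X^{k+1} = (W-J)(I-J)Z^k$, so $\|(I-J)X^{k+1}\|_F\le\rho\,\|(I-J)Z^k\|_F$. Then $\|(I-J)Z^k\|_F$ is bounded in terms of $\|(I-J)X^k\|_F$, $\|\bar x^k-x^\star\|$ and $B$, using (i) and (iii); note that $F_i(x_i,\xi)-x_i$ is $L$-Lipschitz, and centering removes the common part.

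Adding the two recursions with weight $\frac1m(1+1/\nu)$ on the consensus term gives \eqref{ineq:tilde_L}. The specific $\nu$ is chosen to absorb the Young-inequality factor $(1+\nu)$ from the average recursion. The $\rho^2$ conditions in \eqref{cond:rho} are used to make the consensus feedback into the average, and the average feedback into the consensus, small enough that the combined coefficient is the displayed sum. Finally, \eqref{cond:alpha2-second} makes that sum less than one.

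I expect the main obstacle to be this bookkeeping: matching the exact constants (the $8-7\alpha_2(1-\beta)$ and $4-3\alpha_2(1-\beta)$ factors) requires choosing the Young parameters carefully. My plan is to first derive the coupled $2\times2$ linear recursion with generic Young parameters, and then pick those parameters and $\nu$ so that the cross-coupling entries are dominated under \eqref{cond:rho}.
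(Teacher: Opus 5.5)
Your SA argument matches the paper's. You center the noise second moment at $x^\star$ rather than at $x_{sg}^k$, which is equally fine. Your DSA architecture (average recursion plus consensus recursion, combined in $\tilde{\cL}^k$) is also the paper's.

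The step that fails is your last sentence. Put $a:=\alpha_2(1-\beta)$. The two conditions in \eqref{cond:alpha2-second} say, separately, that $c_1:=(1-\tfrac a2)^2+\tfrac{4\alpha_2^2L^2}{m}<1$ and $c_2:=\tfrac43\alpha_2^2L^2+\tfrac{2\alpha_2^3(1-\beta)(8-7a)L^2}{m(4-3a)^2}<1$. They do not give $c_1+c_2<1$. For instance, take $F_i(x,\xi)=-x/2$ (so $\beta=\tfrac12$, $L=\tfrac32$ tight, $x^\star=0$), $\alpha_2=0.55$, $m=20$ and $W=J$ (so $\rho=0$). Every hypothesis holds, yet $c_1\approx 0.88$ and $c_2\approx 0.92$. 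What the absorption argument actually yields is $\tilde{\cL}^{k+1}\le\max\{c_1,c_2\}\,\tilde{\cL}^k+\tfrac5m\alpha_2^2B^2$. After absorption, the coefficient of $\Eb\norm{\bar x^k-x^\star}_2^2$ is at most $c_1$, and that of $\tfrac1m(1+\tfrac1\nu)\Eb\norm{(I-J)X^k}_F^2$ is at most $c_2$. The paper's proof derives exactly this and then weakens $\max$ to the sum. So the closing claim that the summed coefficient in \eqref{ineq:tilde_L} is below one is not established there either, and it is false as stated. State the contraction for $\max\{c_1,c_2\}$, or add $c_1+c_2<1$ as a hypothesis. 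The SA endpoint check you deferred also fails. The coefficient equals $1-\alpha_1\bigl[2(1-\beta)-\alpha_1\bigl((1-\beta)^2+2L^2\bigr)\bigr]$, which is exactly $1$ at $\alpha_1=2(1-\beta)/\bigl((1-\beta)^2+2L^2\bigr)$. That value is admissible whenever $2L^2\ge 1-\beta^2$, so the interval for $\alpha_1$ must be open there.

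On the bookkeeping, the Young parameter is not free, because $\tilde{\cL}^k$ is built with the given $\nu$. That $\nu$ satisfies $(1+\nu)(1-a)^2=(1-\tfrac34 a)^2$, not $(1-\tfrac a2)^2$. The slack $(1-\tfrac a2)^2-(1-\tfrac34 a)^2=\tfrac{a(8-5a)}{16}$ is what the first bound in \eqref{cond:rho} uses to absorb the consensus feedback $4(1+\tfrac1\nu)\rho^2\alpha_2^2\bigl((L+\beta)^2+L^2\bigr)$. The second bound gives $2\rho^2\bigl((1-\alpha_2+\alpha_2L)^2+\alpha_2^2L^2\bigr)\le\tfrac13\alpha_2^2L^2$, and the third caps the noise constant at $5/m$.

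For the consensus step, the paper centers at $\bar x^k+\alpha_2(\bar F(\bar x^k)-\bar x^k)$ and splits off $\alpha_2 e_i^k$ by conditional unbiasedness. It then bounds $\norm{\bar F_i(\bar x^k)-\bar F(\bar x^k)}_2\le(L+\beta)\norm{\bar x^k-x^\star}_2+B$ via (ii). Your (i)+(iii) route also works and is termwise sharper. Centering at $\bar x^k$ gives pathwise $\norm{z_i^k-\bar x^k}_2\le(1-\alpha_2+\alpha_2L)\norm{x_i^k-\bar x^k}_2+\alpha_2\bigl(L\norm{\bar x^k-x^\star}_2+B\bigr)$, hence constants no larger than the paper's. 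To keep the factor $1-\alpha_2$, apply the Lipschitz bound on $F_i$ to $F_i(x_i^k,\xi)-F_i(\bar x^k,\xi)$. Use the bound on $F_i-I$ only for $F_i(\bar x^k,\xi)-\bar x^k$ against its value at $x^\star$. Applying the $F_i-I$ bound to the $x_i^k$-versus-$\bar x^k$ difference gives $1+\alpha_2L$, which the second bound in \eqref{cond:rho} does not control.
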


\begin{proof}   
Denote $q_1:=1-\alpha_1(1-\beta)$ and $q_2:=1-\alpha_2(1-\beta)$ and define $\mathcal{F}_k := \sigma(\{(x_i^t)_{i\in[m]}, x_{sg}^t, (\xi_i^{t-1})_{i\in[m]}, \xi^{t-1}, i_{t-1}\}_{t=1}^k)$, $\Eb_k[.] := \Eb[.|\mathcal{F}_k]$. 

For the single-agent recursion, since $i_k$ is sampled uniformly from $[m]$,
\begin{equation}
    \Eb_k[F_{i_k}(x_{sg}^k,\xi^k)]
=
\frac{1}{m}\sum_{i=1}^m \bar F_i(x_{sg}^k)
=
\bar F(x_{sg}^k),
\end{equation}

Define
$ 
e^k
:=
F_{i_k}(x_{sg}^k,\xi^k)-\bar F(x_{sg}^k),
$ 
then $\Eb_k[e^k]=0$, and
\begin{equation}
x_{sg}^{k+1}
=
x_{sg}^k+\alpha_1(\bar F(x_{sg}^k)-x_{sg}^k)
+
\alpha_1 e^k.
\end{equation}
Since $x^\star=\bar F(x^\star)$, we have
\begin{equation}
x_{sg}^{k+1}-x^\star
=
(1-\alpha_1)(x_{sg}^k-x^\star)
+
\alpha_1(\bar F(x_{sg}^k)-\bar F(x^\star))
+
\alpha_1 e^k.
\end{equation}
Expanding the square gives
\begin{equation}
\begin{aligned}
&\|x_{sg}^{k+1}-x^\star\|_2^2
=
\left\|
(1-\alpha_1)(x_{sg}^k-x^\star)
+
\alpha_1(\bar F(x_{sg}^k)-\bar F(x^\star))
\right\|_2^2 \\
&\quad
+
2\alpha_1
\left\langle
(1-\alpha_1)(x_{sg}^k-x^\star)
+
\alpha_1(\bar F(x_{sg}^k)-\bar F(x^\star)),
 e^k
\right\rangle  
+
\alpha_1^2\|e^k\|_2^2.
\end{aligned}
\end{equation}
Taking conditional expectation and using $\Eb_k[e^k]=0$, the cross term vanishes:
\begin{align}
&\Eb_k\|x_{sg}^{k+1}-x^\star\|_2^2
=
\left\|
(1-\alpha_1)(x_{sg}^k-x^\star)
+
\alpha_1(\bar F(x_{sg}^k)-\bar F(x^\star))
\right\|_2^2 
+
\alpha_1^2\Eb_k\| e^k\|_2^2.
\end{align} 
Since $\bar F$ is a $\beta$-contraction,
\begin{equation}
\left\|
(1-\alpha_1)(x_{sg}^k-x^\star)
+
\alpha_1(\bar F(x_{sg}^k)-\bar F(x^\star))
\right\|_2
\leq
q_1\|x_{sg}^k-x^\star\|_2.
\end{equation}
Therefore,
\begin{equation}
\Eb_k\|x_{sg}^{k+1}-x^\star\|_2^2
\leq
q_1^2\|x_{sg}^k-x^\star\|_2^2
+
\alpha_1^2\Eb_k\|e^k\|_2^2.
\end{equation}
It remains to bound the stochastic term. Since
\begin{equation}
e^k
=
\big(F_{i_k}(x_{sg}^k,\xi^k)-x_{sg}^k\big)
-
\Eb_k\big[F_{i_k}(x_{sg}^k,\xi^k)-x_{sg}^k\big],
\end{equation}
we have
\begin{equation}
\Eb_k\|e^k\|_2^2
\leq
\Eb_k\|F_{i_k}(x_{sg}^k,\xi^k)-x_{sg}^k\|_2^2.
\end{equation}
Since for any $x\in\mathbb{R}^d, \xi\in\mathcal{X}$,
\begin{align*}
\|F_i(x,\xi)\!-\!x\|_2
&\!\leq\!
\|F_i(x,\xi)\!-\!x\!-\!F_i(x^\star,\xi)\!+\!x^\star\|_2
\!+\!
\|F_i(x^\star,\xi)\!-\!x^\star\|_2 \!\leq\!
L\|x\!-\!x^\star\|_2\!+\!B.
\end{align*}
Thus
\begin{equation}
\Eb_k\|e^k\|_2^2
\leq
\left(L\|x_{sg}^k-x^\star\|_2+B\right)^2.
\end{equation}
Combining the above estimates gives
\begin{equation}\label{eq:single-agent-recursion-tight}
\begin{aligned}
\Eb_k\|x_{sg}^{k+1}-x^\star\|_2^2
&\leq
q_1^2\|x_{sg}^k-x^\star\|_2^2
+
\alpha_1^2
\left(L\|x_{sg}^k-x^\star\|_2+B\right)^2\\
& \leq  (q_1^2+2\alpha_1^2 L^2)\norm{x_{sg}^k-x^\star}_2^2 + 2\alpha_1^2 B^2.
\end{aligned}
\end{equation}

We now analyze the decentralized recursion using the same argument. Denote
$
\bar x^k:=\frac{1}{m}\sum_{i=1}^m x_i^k,
$ we have
\begin{equation}
\bar x^{k+1}
=
\bar x^k
+
\alpha_2
\left(
\frac{1}{m}\sum_{i=1}^m F_i(x_i^k,\xi_i^k)
-
\bar x^k
\right).
\end{equation}
Adding and subtracting $\bar F(\bar x^k)$ gives
\begin{equation}
\begin{aligned}
\bar x^{k+1}
&=
\bar x^k+\alpha_2(\bar F(\bar x^k)-\bar x^k) 
+
\frac{\alpha_2}{m}\sum_{i=1}^m
\left(
\bar F_i(x_i^k)-\bar F_i(\bar x^k)
\right) 
+ \frac{\alpha_2}{m}\sum_{i=1}^m
\left(
F_i(x_i^k,\xi_i^k)-\bar F_i(x_i^k)
\right).
\end{aligned}
\end{equation}
Since $x^\star=\bar F(x^\star)$, we have
\begin{equation}
\begin{aligned}
\bar x^{k+1}-x^\star
&=
(1-\alpha_2)(\bar x^k-x^\star)
+
\alpha_2(\bar F(\bar x^k)-\bar F(x^\star)) \\
&\quad
+
\frac{\alpha_2}{m}\sum_{i=1}^m
\left(
\bar F_i(x_i^k)-\bar F_i(\bar x^k)
\right) \\
&\quad
+ \frac{\alpha_2}{m}\sum_{i=1}^m
\left(
F_i(x_i^k,\xi_i^k)-\bar F_i(x_i^k)
\right).
\end{aligned}
\end{equation}
Define
$ 
e_i^k:=F_i(x_i^k,\xi_i^k)-\bar F_i(x_i^k),
$ 
then $\Eb_k[e_i^k]=0$. Expanding the square gives
\begin{equation}
\begin{aligned}
&\|\bar x^{k+1}-x^\star\|_2^2 \\
&=
\left\|
(1-\alpha_2)(\bar x^k-x^\star)
+
\alpha_2(\bar F(\bar x^k)-\bar F(x^\star))
+
\frac{\alpha_2}{m}\sum_{i=1}^m
\left(
\bar F_i(x_i^k)-\bar F_i(\bar x^k)
\right)
\right\|_2^2 \\
&\quad
+
2\alpha_2
\left\langle
(1-\alpha_2)(\bar x^k-x^\star)
+
\alpha_2(\bar F(\bar x^k)-\bar F(x^\star))
+
\frac{\alpha_2}{m}\sum_{i=1}^m
\left(
\bar F_i(x_i^k)-\bar F_i(\bar x^k)
\right),
\frac{1}{m}\sum_{i=1}^m e_i^k
\right\rangle \\
&\quad
+
\alpha_2^2
\left\|
\frac{1}{m}\sum_{i=1}^m e_i^k
\right\|_2^2.
\end{aligned}
\end{equation}
Use
$
\Eb_k\left[
\frac{1}{m}\sum_{i=1}^m e_i^k
\right]=0 
$
to remove the cross term. Hence
\begin{equation}
\resizebox{\textwidth}{!}{$\displaystyle
\begin{aligned}
&\Eb_k\|\bar x^{k+1}-x^\star\|_2^2 \\
&=
\left\|
(1-\alpha_2)(\bar x^k-x^\star)
+
\alpha_2(\bar F(\bar x^k)-\bar F(x^\star))
+
\frac{\alpha_2}{m}\sum_{i=1}^m
\left(
\bar F_i(x_i^k)-\bar F_i(\bar x^k)
\right)
\right\|_2^2 
+
\alpha_2^2
\Eb_k
\left\|
\frac{1}{m}\sum_{i=1}^m e_i^k
\right\|_2^2.
\end{aligned}
$}
\end{equation}

There is
\begin{equation}
\left\|
(1-\alpha_2)(\bar x^k-x^\star)
+
\alpha_2(\bar F(\bar x^k)-\bar F(x^\star))
\right\|_2
\leq
q_2\|\bar x^k-x^\star\|_2.
\end{equation}
Moreover, 
\begin{equation}
\left\|
\frac{1}{m}\sum_{i=1}^m
\left(
\bar F_i(x_i^k)-\bar F_i(\bar x^k)
\right)
\right\|_2
\leq
\frac{L}{\sqrt{m}}\norm{(I-J)X^{k}}_F.
\end{equation}
Therefore, for any $\nu>0$, by Young's inequality,
\begin{equation}
\begin{aligned}
&\left\|
(1-\alpha_2)(\bar x^k-x^\star)
+
\alpha_2(\bar F(\bar x^k)-\bar F(x^\star))
+
\frac{\alpha_2}{m}\sum_{i=1}^m
\left(
\bar F_i(x_i^k)-\bar F_i(\bar x^k)
\right)
\right\|_2^2 \\
&\leq
(1+\nu)q_2^2\|\bar x^k-x^\star\|_2^2
+
\left(1+\frac{1}{\nu}\right)
\frac{\alpha_2^2 L^2}{m}\norm{(I-J)X^{k}}_F^2.
\end{aligned}
\end{equation}
It remains to bound the stochastic term. Since the samplings are independent across agents,
\begin{equation}
\Eb_k
\left\|
\frac{1}{m}\sum_{i=1}^m e_i^k
\right\|_2^2
=
\frac{1}{m^2}
\sum_{i=1}^m
\Eb_k\|e_i^k\|_2^2.
\end{equation}
Also, 
\begin{equation}
\Eb_k\|e_i^k\|_2^2
\leq
\Eb_k\|F_i(x_i^k,\xi_i^k)-x_i^k\|_2^2.
\end{equation}
\begin{align}
\|F_i(x,\xi)-x\|_2
&\leq
\|F_i(x,\xi)-x-F_i(x^\star,\xi)+x^\star\|_2
+
\|F_i(x^\star,\xi)-x^\star\|_2 \notag \\
&\leq
L\|x-x^\star\|_2+B.
\end{align}
Thus
\begin{equation}
\Eb_k\|e_i^k\|_2^2
\leq
\left(
L\|x_i^k-x^\star\|_2+B
\right)^2.
\end{equation}

Combining the above estimates gives
\begin{equation}\label{eq:multi-agent-average-recursion-tight}
\begin{aligned}
\Eb_k\|\bar x^{k+1}-x^\star\|_2^2
&\leq
(1+\nu)q_2^2\|\bar x^k-x^\star\|_2^2
+
\left(1+\frac{1}{\nu}\right)
\frac{\alpha_2^2 L^2}{m}\norm{(I-J)X^{k}}_F^2
\\
&\quad
+
\frac{\alpha_2^2}{m^2}
\sum_{i=1}^m
\left(
L\|x_i^k-x^\star\|_2+B
\right)^2.
\end{aligned}
\end{equation}

For each $i$, by the triangle inequality,
\begin{align}
\|x_i^k-x^\star\|_2 \leq
\|x_i^k-\bar x^k\|_2
+
\|\bar x^k-x^\star\|_2 .
\end{align}
 
Therefore,  
\begin{align}
\left(
L\|x_i^k-x^\star\|_2+B
\right)^2
\leq
2L^2\|x_i^k-\bar x^k\|_2^2
+
2\left(
L\|\bar x^k-x^\star\|_2+B
\right)^2 .
\end{align}
Summing over $i \in [m]$ gives
\begin{align}
\sum_{i=1}^m
\left(
L\|x_i^k-x^\star\|_2+B
\right)^2
\leq
2L^2\norm{(I-J)X^k}_F^2
+
2m
\left(
L\|\bar x^k-x^\star\|_2+B
\right)^2 .
\end{align}
Plugging this bound into
\eqref{eq:multi-agent-average-recursion-tight}, we obtain
\begin{align}
\Eb_k\|\bar x^{k+1}-x^\star\|_2^2
&\leq
(1+\nu)q_2^2\|\bar x^k-x^\star\|_2^2
+
\left[
\left(1+\frac{1}{\nu}\right)
\frac{\alpha_2^2 L^2}{m}
+
\frac{2\alpha_2^2 L^2}{m^2}
\right]
\norm{(I-J)X^{k}}_F^2
\notag\\
&\quad
+
\frac{2\alpha_2^2}{m}
\left(
L\|\bar x^k-x^\star\|_2+B
\right)^2 .
\end{align}
Finally, we obtain the fully quadratic bound
\begin{align}
\Eb_k\|\bar x^{k+1}-x^\star\|_2^2
&\leq
\left(
(1+\nu)q_2^2+\frac{4\alpha_2^2L^2}{m}
\right)
\|\bar x^k-x^\star\|_2^2
\notag\\
&\quad
+
\left[
\left(1+\frac{1}{\nu}\right)
\frac{\alpha_2^2 L^2}{m}
+
\frac{2\alpha_2^2 L^2}{m^2}
\right]
\norm{(I-J)X^{k}}_F^2
+
\frac{4\alpha_2^2B^2}{m}.
\end{align}

It remains to bound the consensus error $\norm{(I-J)X^k}_F^2$. We consider
\begin{equation}
\|(I-J)X^{k+1}\|_F^2
=
\|(W-J)Z^k\|_F^2
\leq
\rho^2\|(I-J)Z^k\|_F^2.
\end{equation}
By the definition of projection onto the consensus subspace, for any $y\in\mathbb{R}^d$,
\begin{equation}
\|(I-J)Z^k\|_F^2
\leq
\sum_{i=1}^m\|z_i^k-y\|_2^2.
\end{equation}
We choose
\begin{equation}
    y=
\bar x^k+\alpha_2(\bar F(\bar x^k)-\bar x^k).
\end{equation}

Since
\begin{equation}
z_i^k
=
x_i^k+\alpha_2(F_i(x_i^k,\xi_i^k)-x_i^k),
\end{equation}

we get
\begin{align}
z_i^k-y
&=
(1-\alpha_2)(x_i^k-\bar x^k)
+
\alpha_2(F_i(x_i^k,\xi_i^k)-\bar F(\bar x^k)) \notag \\
&=
(1-\alpha_2)(x_i^k-\bar x^k)
+
\alpha_2(\bar F_i(x_i^k)-\bar F_i(\bar x^k)) \notag \\
&\quad +
\alpha_2(\bar F_i(\bar x^k)-\bar F(\bar x^k))
+
\alpha_2 e_i^k
\end{align}
Then
\begin{align}
\Eb_k\|z_i^k-y\|_2^2
&=
\left\|
(1-\alpha_2)(x_i^k-\bar x^k)
+
\alpha_2(\bar F_i(x_i^k)-\bar F_i(\bar x^k))
+
\alpha_2(\bar F_i(\bar x^k)-\bar F(\bar x^k))
\right\|_2^2 
+
\alpha_2^2\Eb_k\|e_i^k\|_2^2.
\end{align}
 
\begin{equation}
    \|\bar F_i(x_i^k)-\bar F_i(\bar x^k)\|_2
\leq
L\|x_i^k-\bar x^k\|_2.
\end{equation}

Also, as shown above,
\begin{equation}
    \Eb_k\|e_i^k\|_2^2
\leq
\left(
L\|x_i^k-x^\star\|_2+B
\right)^2.
\end{equation}

Therefore,
\begin{align}
\Eb_k[\norm{(I-J)X^{k+1}}_F^2]
&\leq
\rho^2
\sum_{i=1}^m
\left(
(1-\alpha_2+\alpha_2 L)\|x_i^k-\bar x^k\|_2
+
\alpha_2\|\bar F_i(\bar x^k)-\bar F(\bar x^k)\|_2
\right)^2 \notag\\
&\quad
+
\rho^2\alpha_2^2
\sum_{i=1}^m
\left(
L\|x_i^k-x^\star\|_2+B
\right)^2 .
\label{eq:consensus-tight}
\end{align}
Use
\begin{equation}
\sum_{i=1}^m
\left(
L\|x_i^k-x^\star\|_2+B
\right)^2
\leq
2m\left(L\|\bar x^k-x^\star\|_2+B\right)^2
+
2L^2\norm{(I-J)X^k}_F^2,
\end{equation}

we obtain
\begin{align}
\Eb_k[\norm{(I-J)X^{k+1}}_F^2]
&\leq
2\rho^2\left((1-\alpha_2+\alpha_2 L)^2+\alpha_2^2L^2\right)\norm{(I-J)X^k}_F^2 \notag\\
& 
+
2\rho^2\alpha_2^2
\sum_{i=1}^m
\|\bar F_i(\bar x^k)-\bar F(\bar x^k)\|_2^2
+
2\rho^2\alpha_2^2m
\left(L\|\bar x^k-x^\star\|_2+B\right)^2.
\label{eq:consensus-closed-final}
\end{align}
Combining \eqref{eq:single-agent-recursion-tight} and \eqref{eq:consensus-closed-final} yields

Taking total expectation in \eqref{eq:single-agent-recursion-tight}, we obtain
\begin{align}
\Eb\|x_{sg}^{k+1}-x^\star\|_2^2
&\leq
\left(q_1^2+2\alpha_1^2L^2\right)
\Eb\norm{x_{sg}^k-x^\star}_2^2
+
2\alpha_1^2B^2 .
\label{eq:single-agent-total-recursion}
\end{align}

For decentralized setting, taking total expectation in the average recursion gives
\begin{align}
\Eb\|\bar x^{k+1}-x^\star\|_2^2
&\leq
\left(
(1+\nu)q_2^2+\frac{4\alpha_2^2L^2}{m}
\right)
\Eb\|\bar x^k-x^\star\|_2^2
\notag\\
&\quad
+
\left[
\left(1+\frac{1}{\nu}\right)
\frac{\alpha_2^2 L^2}{m}
+
\frac{2\alpha_2^2 L^2}{m^2}
\right]
\Eb\norm{(I-J)X^{k}}_F^2
+
\frac{4\alpha_2^2B^2}{m}.
\label{eq:multi-agent-average-total-recursion}
\end{align}

Further simplify the consensus recursion. From
\eqref{eq:consensus-closed-final}, taking total expectation yields
\begin{align}
\Eb[\norm{(I-J)X^{k+1}}_F^2]
&\leq
2\rho^2\left((1-\alpha_2+\alpha_2 L)^2+\alpha_2^2L^2\right)
\Eb\norm{(I-J)X^k}_F^2
\notag\\
&\quad
+
2\rho^2\alpha_2^2
\Eb\sum_{i=1}^m
\|\bar F_i(\bar x^k)-\bar F(\bar x^k)\|_2^2
\notag\\
&\quad
+
2\rho^2\alpha_2^2m
\Eb\left(L\|\bar x^k-x^\star\|_2+B\right)^2 .
\label{eq:consensus-total-before-final-bound}
\end{align}
Using
\begin{align}
\left(L\|\bar x^k-x^\star\|_2+B\right)^2
\leq
2L^2\|\bar x^k-x^\star\|_2^2+2B^2,
\end{align}
we get
\begin{align}
\Eb[\norm{(I-J)X^{k+1}}_F^2]
&\leq
2\rho^2\left((1-\alpha_2+\alpha_2 L)^2+\alpha_2^2L^2\right)
\Eb\norm{(I-J)X^k}_F^2
\notag\\
&\quad
+
2\rho^2\alpha_2^2
\Eb\sum_{i=1}^m
\|\bar F_i(\bar x^k)-\bar F(\bar x^k)\|_2^2
\notag\\
&\quad
+
4\rho^2\alpha_2^2mL^2
\Eb\|\bar x^k-x^\star\|_2^2
+
4\rho^2\alpha_2^2mB^2 .
\label{eq:consensus-total-recursion-with-heterogeneity}
\end{align}

To bound the second term $
2\rho^2\alpha_2^2
\Eb\sum_{i=1}^m
\|\bar F_i(\bar x^k)-\bar F(\bar x^k)\|_2^2$, we decompose, for each $i \in [m]$,
\begin{align}
\bar F_i(\bar x^k)-\bar F(\bar x^k)
&=
\bar F_i(\bar x^k)-\bar F_i(x^\star)
+
\bar F_i(x^\star)
-
\bar F(x^\star)
+
\bar F(x^\star)-\bar F(\bar x^k) \notag\\
&=
\bar F_i(\bar x^k)-\bar F_i(x^\star)
+
\bar F_i(x^\star)-x^\star
+
\bar F(x^\star)-\bar F(\bar x^k).
\end{align}
By the triangle inequality,
\begin{equation}
\|\bar F_i(\bar x^k)-\bar F(\bar x^k)\|_2
 \leq
\|\bar F_i(\bar x^k)-\bar F_i(x^\star)\|_2
+
\|\bar F_i(x^\star)-x^\star\|_2
+
\|\bar F(x^\star)-\bar F(\bar x^k)\|_2 .
\end{equation}

Then, combining 
\begin{equation}
\|\bar F_i(\bar x^k)\!-\!\bar F_i(x^\star)\|_2
\leq
L\|\bar x^k\!-\!x^\star\|_2,
\,\,
\|\bar F(x^\star)\!-\!\bar F(\bar x^k)\|_2
\leq
\beta\|\bar x^k\!-\!x^\star\|_2,\,\, \|\bar F_i(x^\star)\!-\!x^\star\|_2
\leq
B.
\end{equation}
   
gives 
\begin{align}
\|\bar F_i(\bar x^k)-\bar F(\bar x^k)\|_2^2
&\leq
2(L+\beta)^2\|\bar x^k-x^\star\|_2^2
+
2B^2.
\end{align}
 
Therefore,
\begin{align}
2\rho^2\alpha_2^2
\Eb \sum_{i=1}^m
\|\bar F_i(\bar x^k)-\bar F(\bar x^k)\|_2^2 
&\leq
4\rho^2\alpha_2^2m(L+\beta)^2
\Eb\|\bar x^k-x^\star\|_2^2
+
4\rho^2\alpha_2^2mB^2.
\label{eq:heterogeneity-term-final-bound}
\end{align}

Plugging \eqref{eq:heterogeneity-term-final-bound} into
\eqref{eq:consensus-total-recursion-with-heterogeneity}, we get
\begin{align}
\Eb[\norm{(I-J)X^{k+1}}_F^2]
&\leq
2\rho^2\left((1-\alpha_2+\alpha_2 L)^2+\alpha_2^2L^2\right)
\Eb\norm{(I-J)X^k}_F^2
\notag\\
&\quad
+
4\rho^2\alpha_2^2m
\left((L+\beta)^2+L^2\right)
\Eb\|\bar x^k-x^\star\|_2^2
+
8\rho^2\alpha_2^2mB^2.
\label{eq:consensus-total-recursion-closed}
\end{align}

Now combining \eqref{eq:multi-agent-average-total-recursion} and
\eqref{eq:consensus-total-recursion-closed}, define
\begin{align}
\tilde{\mathcal{L}}^k
:=
\Eb\|\bar x^k-x^\star\|_2^2
+
\frac{1}{m}\left(1+\frac{1}{\nu}\right)
\Eb\norm{(I-J)X^{k}}_F^2 .
\end{align}
Then
\begin{align}
\tilde{\mathcal{L}}^{k+1}
&=
\Eb\|\bar x^{k+1}-x^\star\|_2^2
+
\frac{1}{m}
\left(1+\frac{1}{\nu}\right)
\Eb\norm{(I-J)X^{k+1}}_F^2
\notag\\
&\leq
\left(
(1+\nu)q_2^2+\frac{4\alpha_2^2L^2}{m}
\right)
\Eb\|\bar x^k-x^\star\|_2^2
\notag\\
&\quad
+
\left(
\left(1+\frac{1}{\nu}\right)
\frac{\alpha_2^2 L^2}{m}
+
\frac{2\alpha_2^2 L^2}{m^2}
\right)
\Eb\norm{(I-J)X^{k}}_F^2
+
\frac{4\alpha_2^2B^2}{m}
\notag\\
&\quad
+
\frac{1}{m}
\left(1+\frac{1}{\nu}\right)
\Bigg(
2\rho^2\left((1-\alpha_2+\alpha_2 L)^2+\alpha_2^2L^2\right)
\Eb\norm{(I-J)X^k}_F^2
\notag\\
&\qquad\qquad
+
4\rho^2\alpha_2^2m
\left((L+\beta)^2+L^2\right)
\Eb\|\bar x^k-x^\star\|_2^2
+
8\rho^2\alpha_2^2mB^2
\Bigg)
\notag\\
&=
\left(
(1+\nu)q_2^2
+
\frac{4\alpha_2^2L^2}{m}
+
4\left(1+\frac{1}{\nu}\right)\rho^2\alpha_2^2
\left((L+\beta)^2+L^2\right)
\right)
\Eb\|\bar x^k-x^\star\|_2^2
\notag\\
&\quad
+
\left(
\left(1+\frac{1}{\nu}\right)
\frac{\alpha_2^2 L^2}{m}
+
\frac{2\alpha_2^2 L^2}{m^2}
+
\left(1+\frac{1}{\nu}\right)
\frac{2\rho^2}{m}
\left((1-\alpha_2+\alpha_2 L)^2+\alpha_2^2L^2\right)
\right)
\Eb\norm{(I-J)X^{k}}_F^2
\notag\\
&\quad
+
\left(
\frac{4}{m}
+
8\left(1+\frac{1}{\nu}\right)\rho^2
\right)
\alpha_2^2B^2 .
\end{align}
 
Then
\begin{align}
\tilde{\mathcal{L}}^{k+1}
&\leq
\left(
(1+\nu)q_2^2
+
\frac{4\alpha_2^2L^2}{m}
+
4\left(1+\frac{1}{\nu}\right)\rho^2\alpha_2^2
\left((L+\beta)^2+L^2\right)
\right)
\Eb\|\bar x^k-x^\star\|_2^2
\notag\\
&\quad
+
\left(
\alpha_2^2L^2
+
\frac{2\nu\alpha_2^2L^2}{(\nu+1)m}
+
2\rho^2
\left((1-\alpha_2+\alpha_2 L)^2+\alpha_2^2L^2\right)
\right)
\left(1+\frac{1}{\nu}\right)
\frac{1}{m}
\Eb\norm{(I-J)X^{k}}_F^2
\notag\\
&\quad
+
\left(
\frac{4}{m}
+
8\left(1+\frac{1}{\nu}\right)\rho^2
\right)
\alpha_2^2B^2 .
\end{align}
Therefore,
\begin{align}
\tilde{\mathcal{L}}^{k+1}
&\leq
\max\Bigg(
(1+\nu)q_2^2
+
\frac{4\alpha_2^2L^2}{m}
+
4\left(1+\frac{1}{\nu}\right)\rho^2\alpha_2^2
\left((L+\beta)^2+L^2\right),
\notag\\
&\qquad\qquad
\alpha_2^2L^2
+
\frac{2\nu\alpha_2^2L^2}{(\nu+1)m}
+
2\rho^2
\left((1-\alpha_2+\alpha_2 L)^2+\alpha_2^2L^2\right)
\Bigg)
\tilde{\mathcal{L}}^k
\notag\\
&\quad
+
\left(
\frac{4}{m}
+
8\left(1+\frac{1}{\nu}\right)\rho^2
\right)
\alpha_2^2B^2 .
\label{eq:multi-agent-final-recursion-tilde-L}
\end{align}

Now choose $\nu>0$ such that
\begin{align}
(1+\nu)q_2^2
=
\left(1-\frac{3\alpha_2(1-\beta)}{4}\right)^2,
\end{align}
we obtain
\begin{align}
\tilde{\mathcal{L}}^{k+1}
&\leq
\max\Bigg(
\left(1-\frac{3\alpha_2(1-\beta)}{4}\right)^2
+
\frac{4\alpha_2^2L^2}{m}
\notag\\
&\qquad\qquad
+
\frac{
4\rho^2\alpha_2
\left(4-3\alpha_2(1-\beta)\right)^2
}{
(1-\beta)\left(8-7\alpha_2(1-\beta)\right)
}
\left((L+\beta)^2+L^2\right),
\notag\\
&\qquad\qquad
\alpha_2^2L^2
+
\frac{
2\alpha_2^3(1-\beta)\left(8-7\alpha_2(1-\beta)\right)L^2
}{
m\left(4-3\alpha_2(1-\beta)\right)^2
}
\notag\\
&\qquad\qquad
+
2\rho^2
\left((1-\alpha_2+\alpha_2 L)^2+\alpha_2^2L^2\right)
\Bigg)
\tilde{\mathcal{L}}^k
\notag\\
&\quad
+
\left(
\frac{4}{m}
+
\frac{
8\rho^2
\left(4-3\alpha_2(1-\beta)\right)^2
}{
\alpha_2(1-\beta)\left(8-7\alpha_2(1-\beta)\right)
}
\right)
\alpha_2^2B^2 .
\label{eq:multi-agent-final-recursion-tilde-L-nu-chosen}
\end{align}

Suppose $\rho$ satisfies
\begin{equation}\label{example_ineq:rho}
\begin{aligned}
&    \frac{
4\rho^2\alpha_2
\left(4-3\alpha_2(1-\beta)\right)^2
}{
(1-\beta)\left(8-7\alpha_2(1-\beta)\right)
}
\left((L+\beta)^2+L^2\right) \leq \left(1-\frac{\alpha_2(1-\beta)}{2}\right)^2-\left(1-\frac{3\alpha_2(1-\beta)}{4}\right)^2, \\
& 2\rho^2
\left((1-\alpha_2+\alpha_2 L)^2+\alpha_2^2L^2\right)  \leq \frac{\alpha_2^2 L^2}{3},\\
& \frac{
8\rho^2
\left(4-3\alpha_2(1-\beta)\right)^2
}{
\alpha_2(1-\beta)\left(8-7\alpha_2(1-\beta)\right)
} \leq \frac{1}{m},
\end{aligned}
\end{equation}
 then the recursive becomes
\begin{equation}
\begin{aligned}
&\tilde{\mathcal{L}}^{k+1}\\
&\leq
\max\Bigg(
\left(1-\frac{ \alpha_2(1-\beta)}{2}\right)^2
+
\frac{4\alpha_2^2L^2}{m}, 
\frac{4\alpha_2^2L^2}{3}
+
\frac{
2\alpha_2^3(1-\beta)\left(8-7\alpha_2(1-\beta)\right)L^2
}{
m\left(4-3\alpha_2(1-\beta)\right)^2
} 
\Bigg)
\tilde{\mathcal{L}}^k
\notag 
+ 
\frac{5}{m}
\alpha_2^2B^2\\
& \leq \left(\left(1-\frac{ \alpha_2(1-\beta)}{2}\right)^2
+
\frac{4\alpha_2^2L^2}{m}+
\frac{4\alpha_2^2L^2}{3}
+
\frac{
2\alpha_2^3(1-\beta)\left(8-7\alpha_2(1-\beta)\right)L^2
}{
m\left(4-3\alpha_2(1-\beta)\right)^2
} \right)\tilde{\cL}^k+\frac{5}{m}\alpha_2^2B^2.\\
\end{aligned}
\end{equation}

For comparison, the single-agent recursion is
\begin{align}
\Eb\|x_{sg}^{k+1}-x^\star\|_2^2
&\leq
\left((1-\alpha_1(1-\beta))^2+2\alpha_1^2L^2\right)
\Eb\norm{x_{sg}^k-x^\star}_2^2
+
2\alpha_1^2B^2. 
\end{align}

If $\alpha_1, \alpha_2$ are small enough with order $\mathcal{O}\left(1/L\right)$, then 
\begin{equation}
\begin{aligned}
& \Eb\|x_{sg}^{k+1}-x^\star\|_2^2 = \mathcal{O}\left((1-\alpha_1(1-\beta))^{2k}\right)\Eb\norm{x_{sg}^0-x^\star}_2^2 + \mathcal{O}\left(\frac{\alpha_1^2 B^2}{1-(1-\alpha_1(1-\beta))^2}\right),\\
&\tilde{\cL}^{k+1} = \mathcal{O}\left((1-\alpha_2(1-\beta)/2)^{2k}\right)\tilde{\cL}^0 + \mathcal{O}\left(\frac{\alpha_2^2 B^2}{m(1-(1-\alpha_2(1-\beta))^2)}\right).
\end{aligned}
\end{equation}
 
\end{proof}

\subsection{Empirical Demonstration of Speedup under Decentralization}
\label{sec:exp-dsa-speedup}

We run a small experiment to check that decentralized SA is faster than single-agent SA in iteration count. The setup is deliberately simple so that the finite-sum assumption used in Proposition~\ref{prop:linear-speedup} holds exactly. We run $10$ seeds for $600{,}000$ parallel iterations. SA draws one sample per iteration while DSA draws $m=4$.

We take the experiment on a a $16\times 16$ gridworld. The valid state space is four $7\times 7$ cores separated by a wall strip (rows and columns $\{7,8\}$, zero-indexed).
The goal $(15,15)$ is absorbing. Transitions are stochastic: the intended
action succeeds with probability $0.8$ and slips left or right each with
probability $0.1$ and stay is deterministic. Hitting a wall or halo
cell leaves the agent in place. Reward is $+1$ on entering the goal and $0$
otherwise discount $\gamma = 0.99$.

The policy $\pi$ is uniform over the five actions
$\{\texttt{up}, \texttt{right}, \texttt{down}, \texttt{left}, \texttt{stay}\}$.
The policy-evaluation Bellman operator
\[
(T_\pi V)(s) = \sum_a \pi(a\mid s) \sum_{s'} P(s'\mid s,a)\bigl[r(s,a,s') + \gamma V(s')\bigr]
\]
is a $\gamma$-contraction in $\|\cdot\|_\infty$. We compute the reference
$V^\pi$ by exact policy iteration to sup-norm tolerance $10^{-12}$. Since
the cores are wall-separated, every successor of a state in core $i$ stays
in core $i$, so $T_\pi$ splits as $T_\pi = \tfrac{1}{m}\sum_{i=1}^m T_{\pi,i}$
with $m=4$ and each $T_{\pi,i}$ supported on its own core. This matches
the finite-sum setting in Section~\ref{sec:problem_set}. Results are in Table~\ref{tab:exp-speedup-results} and Figures~\ref{fig:exp-speedup-curves}. The two methods reach roughly the same plateau, confirming that the matched-
plateau calibration worked. DSA hits the common $L_\infty$ band by about
$85{,}000$ iterations while SA needs about $470{,}000$, a $5.54\times$
gap; the $L_2$ gap is similar at $5.84\times$. Per-seed variance is also smaller for
DSA, which is consistent with the $1/m$ factor on the stochastic term in
Proposition~\ref{prop:linear-speedup}.

\begin{figure}[!ht]
\centering
\begin{minipage}[c]{0.50\linewidth}
  \centering
  \captionof{table}{DSA vs.\ SA on the wall-separated $16\times 16$ grid, $10$ seeds.}
  \label{tab:exp-speedup-results}
  \small
  \resizebox{\linewidth}{!}{%
  \begin{tabular}{ccccc}
    \toprule
    Method & $\alpha$ & $m$ & Stable hit ($L_\infty$) & Stable hit ($L_2$) \\
    \midrule
    SA  & $0.05$ & $1$ & $469{,}400 \pm 12{,}869$ & $521{,}111$ \\
    DSA & $0.30$ & $4$ & $\phantom{0}84{,}800 \pm \phantom{0}2{,}585$ & $\phantom{0}89{,}200$ \\
    \midrule
    \textbf{iteration speedup} & & & $\mathbf{5.54\times}$ & $\mathbf{5.84\times}$ \\
    \bottomrule
  \end{tabular}}
\end{minipage}%
\hfill
\begin{minipage}[c]{0.46\linewidth}
  \centering
  \captionof{figure}{$\|V^k - V^\pi\|_\infty$ versus parallel iterations.}
  \label{fig:exp-speedup-curves}
  \includegraphics[width=\linewidth]{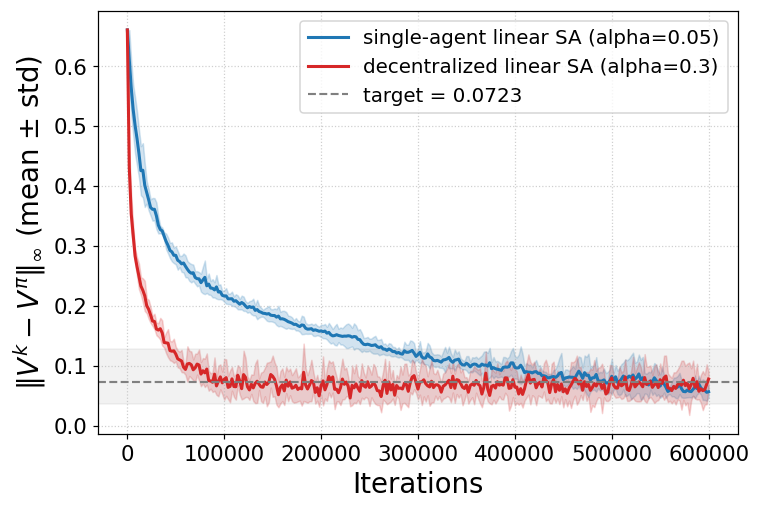}
\end{minipage}
\end{figure}

\section{Strict-decomposition operator relies on block closure}

\subsection{Proof of Proposition~\ref{thm:strict decomposition bias}}\label{app:prop_sd_bias}

\textbf{Proposition 2} (Bias of Strict Decomposition)
\textit{Fix $i\in[m]$ and $u\in\mathbb{R}^{|D_i|}$. Define
$$
\Delta_i(u)
:=
\sup\Bigl\{
\|(\bar F(x))_{D_i}-(\bar F(x'))_{D_i}\|_c
:
x_{D_i}=x'_{D_i}=u
\Bigr\}.
$$
Then every strict-decomposition  operator $T^{\mathrm{hard}}$ satisfies
$$
\sup_{x:\,x_{D_i}=u}
\|(T^{\mathrm{hard}}(x)-\bar F(x))_{D_i}\|_c
\geq
\frac{1}{2}\Delta_i(u).
$$}
\begin{proof}
Fix $i$ and $u$, and let $(T^{\mathrm{hard}}(x))_{D_i}=G_i(x_{D_i})$. If $x_{D_i}=x'_{D_i}=u$, then
$$
(T^{\mathrm{hard}}(x))_{D_i}
=
(T^{\mathrm{hard}}(x'))_{D_i}
=
G_i(u).
$$
Hence
$$
\|(\bar F(x))_{D_i}-(\bar F(x'))_{D_i}\|_c
\leq
\|(\bar F(x))_{D_i}-G_i(u)\|_c
+
\|G_i(u)-(\bar F(x'))_{D_i}\|_c.
$$
Therefore,
$$
\|(\bar F(x))_{D_i}-(\bar F(x'))_{D_i}\|_c
\leq
2\sup_{z:\,z_{D_i}=u}
\|(T^{\mathrm{hard}}(z)-\bar F(z))_{D_i}\|_c.
$$
Taking the supremum over all $x,x'$ such that $x_{D_i}=x'_{D_i}=u$ gives
$$
\Delta_i(u)
\leq
2\sup_{z:\,z_{D_i}=u}
\|(T^{\mathrm{hard}}(z)-\bar F(z))_{D_i}\|_c,
$$
which is exactly the claimed inequality.
\end{proof}

\subsection{Strict-decomposition MDP}
\label{app:strict-decomposition-qlearning-gap}

In this section, we consider a corner case that make strict decomposition works the same as centralized case.
 
\begin{proposition}
Consider a finite discounted MDP $(\mathcal S,\mathcal A,P,r,\gamma)$ with $0<\gamma<1$. Let $\{C_i\}_{i=1}^m$ be a partition of $\mathcal S$ such that $\cup_{i=1}^m C_i = \mathcal{S}$, each $C_i$ is connected and $C_i$'s are pairwise disjoint. Set
$D_i=C_i\times\mathcal A$, and the Bellman optimality operator
\begin{equation}
(HQ)(s,a)
=
\sum_{s'\in\mathcal S}
P(s'|s,a)
\Bigl(r(s,a,s')+\gamma \max_{b\in\mathcal A} Q(s',b)\Bigr)
\end{equation}
admits an exact strict decomposition with respect to the blocks $\{D_i\}$, meaning
that there exist maps $G_i:\mathbb R^{|D_i|}\to \mathbb R^{|D_i|}$ such that $(HQ)_{D_i}=G_i(Q_{D_i})$ for every $Q\in \mathbb R^{\mathcal S\times \mathcal A}$, if and only if every component is dynamically closed, $P(C_i|s,a)=1$ for all $i\in[m],\ \forall s\in C_i,\ \forall a\in\mathcal A$.
\end{proposition}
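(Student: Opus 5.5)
The plan is to prove the two directions separately, working directly from the formula for $(HQ)(s,a)$. Neither direction needs the connectedness of the $C_i$, and neither uses Proposition~\ref{thm:strict decomposition bias} except as intuition.

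\emph{Sufficiency.} Assume $P(C_i\mid s,a)=1$ for all $s\in C_i$ and $a\in\mathcal A$. Then $P(s'\mid s,a)=0$ for every $s'\notin C_i$, so for $(s,a)\in D_i$ the sum in $(HQ)(s,a)$ can be restricted to $s'\in C_i$:
\[
(HQ)(s,a)=\sum_{s'\in C_i}P(s'\mid s,a)\Bigl(r(s,a,s')+\gamma\max_{b\in\mathcal A}Q(s',b)\Bigr).
\]
Each term $\max_b Q(s',b)$ with $s'\in C_i$ depends only on $Q_{D_i}$. Defining $G_i(Q_{D_i})$ by this right-hand side therefore gives $(HQ)_{D_i}=G_i(Q_{D_i})$ for every $Q$.

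\emph{Necessity, by contrapositive.} Suppose there exist $i$, $s\in C_i$, $a\in\mathcal A$ and $s'\notin C_i$ with $p:=P(s'\mid s,a)>0$. I would take $Q=0$ and let $Q'$ equal $Q$ everywhere except $Q'(s',b)=c$ for all $b\in\mathcal A$, where $c>0$. Since $s'\notin C_i$, the two functions agree on $D_i$, that is, $Q_{D_i}=Q'_{D_i}$. In the difference $(HQ')(s,a)-(HQ)(s,a)$, every successor other than $s'$ contributes identical terms, and the rewards cancel. The only surviving term is $\gamma p\,(\max_b Q'(s',b)-\max_b Q(s',b))=\gamma p c$, which is nonzero because $\gamma>0$. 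Hence $(HQ)_{D_i}\neq(HQ')_{D_i}$ even though $Q_{D_i}=Q'_{D_i}$, so no map $G_i$ with $(HQ)_{D_i}=G_i(Q_{D_i})$ can exist. Equivalently, $\Delta_i(0)>0$ in the sense of Proposition~\ref{thm:strict decomposition bias}.

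The main point requiring care is making the perturbation change the max at $s'$. Raising the whole row $Q(s',\cdot)$ uniformly by $c$ guarantees that the max increases by exactly $c$, which avoids any case analysis over the argmax. The sufficiency direction is routine.
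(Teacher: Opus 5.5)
Your proof is correct and follows essentially the same route as the paper: restrict the Bellman sum to $C_i$ for sufficiency, and for necessity compare $Q=0$ with a perturbation $Q'$ supported at a successor state outside $C_i$, so that $(HQ')(s,a)-(HQ)(s,a)=\gamma p\,c>0$. The only difference is cosmetic. The paper perturbs a single entry $(y,b_0)$ by $M>0$, which already raises the maximum from $0$ to $M$ because the baseline is $Q=0$, while you raise the whole row; both avoid any argmax case analysis.
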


\begin{proof}
First suppose that each $C_i$ is dynamically closed, that is, for every
$s\in C_i$ and $a\in\mathcal A$, we have $P(s'|s,a)=0$ whenever
$s'\notin C_i$. Hence
\begin{equation}
(HQ)(s,a)
=
\sum_{s'\in C_i}
P(s'|s,a)
\Bigl(r(s,a,s')+\gamma \max_{b\in\mathcal A} Q(s',b)\Bigr).
\end{equation}

The right-hand side depends on $Q$ only through the entries
$\{Q(s',b):s'\in C_i,\ b\in\mathcal A\}=Q_{D_i}$. Therefore $H$ is strictly
decomposable by defining
\begin{equation}    
\bigl(G_i(q)\bigr)(s,a)
=
\sum_{s'\in C_i}
P(s'|s,a)
\Bigl(r(s,a,s')+\gamma \max_{b\in\mathcal A} q(s',b)\Bigr),
\qquad s\in C_i,\ a\in\mathcal A .
\end{equation}

Conversely, suppose that $H$ admits such a strict decomposition. Assume, toward
a contradiction, that some component is not dynamically closed. Then there exist
$i$, $s\in C_i$, $a\in\mathcal A$, and $y\notin C_i$ such that $p:=P(y|s,a)>0$. Choose an action $b_0\in\mathcal A$ and a scalar $M>0$. Define two action-value
functions $Q,Q'\in\mathbb R^{\mathcal S\times\mathcal A}$ by $Q(z,b)=0$ for all $(z,b)$,
and
\begin{equation}
Q'(z,b)=
\begin{cases}
M, & (z,b)=(y,b_0),\\
0, & \text{otherwise}.
\end{cases}
\end{equation}

Since $y\notin C_i$, the two functions agree on $D_i=C_i\times\mathcal A$: $Q_{D_i}=Q'_{D_i}$. Strict decomposability would therefore imply $(HQ)_{D_i}=G_i(Q_{D_i})=G_i(Q'_{D_i})=(HQ')_{D_i}$. In particular, it would imply $(HQ)(s,a)=(HQ')(s,a)$. But the reward terms cancel when subtracting, and the two functions differ only at state $y$, so
\begin{equation}
(HQ')(s,a)-(HQ)(s,a)
=
\gamma P(y|s,a)
\left(
\max_{b\in\mathcal A}Q'(y,b)-\max_{b\in\mathcal A}Q(y,b)
\right)
=
\gamma p M
>0.
\end{equation}

This contradicts $(HQ)(s,a)=(HQ')(s,a)$. Hence no such transition to
$\mathcal S\setminus C_i$ can exist. Since $i$ was arbitrary, every block
$C_i$ must be dynamically closed.
\end{proof}

\section{Analysis in Section~\ref{sec:Core-Halo}}\label{app:analysis_ch}

\subsection{Proof of Theorem~\ref{lem:ch1}}\label{subsec:thm1}

\textbf{Theorem 1} (Advantage of Core-Halo) \textit{Let $\{D_i\}_{i=1}^m$ be a partition of $[d]$. Suppose the Core-Halo decomposition $\{(D_i,S_i)\}_{i=1}^m$ is compatible with $\bar F$, so for each $i\in[m]$ there exists a map $T_i:\mathbb R^{|S_i|}\to\mathbb R^{|D_i|}$ such that $T_i(x_{S_i})=[\bar F(x)]_{D_i},
    \; \forall x\in\mathbb R^d $. Define the lifted Core-Halo operator $\widetilde T_i:\mathbb R^d\to\mathbb R^d$ by $ [\widetilde T_i(x)]_{D_i}:=T_i(x_{S_i}),
    \;
    [\widetilde T_i(x)]_{[d]\setminus D_i}:=x_{[d]\setminus D_i} $. Then 
    \begin{equation}
        \frac1m\sum_{i=1}^m \widetilde T_i(x)
    =
    \left(1-\frac1m\right)x+\frac1m\bar F(x),
    \; \forall x\in\mathbb R^d,
    \end{equation} and therefore $ x=\frac1m\sum_{i=1}^m \widetilde T_i(x)$ is equivalent to
    $x=\bar F(x)$.
}

\begin{proof}
Fix any coordinate $j\in[d]$, and let $i(j)$ be the unique index such that $j\in D_{i(j)}$. By construction,
$[\widetilde T_{i(j)}(x)]_j = [\bar F(x)]_j$. For every $\ell\neq i(j)$, the coordinate $j$ is outside $D_\ell$, so $[\widetilde T_\ell(x)]_j = x_j$. Therefore $\left[\frac{1}{m}\sum_{i=1}^m \widetilde T_i(x)\right]_j
=\frac{1}{m}[\bar F(x)]_j + \frac{m-1}{m}x_j$. Since this holds for every $j\in[d]$, we obtain
\begin{equation}    
\frac{1}{m}\sum_{i=1}^m \widetilde T_i(x)
=
\left(1-\frac{1}{m}\right)x + \frac{1}{m}\bar F(x).
\end{equation}

The fixed-point equivalence follows immediately.
\end{proof}

\subsection{Proof of Proposition~\ref{prop:perform-sd}}\label{subsec:prop3}

\textbf{Proposition 3} (Performance decay of strict-decomposition operator)
\textit{Under the grid environment stated above, consider the residual of strict-decomposition operator
\[
\operatorname{Dev}(\mathcal H^{\rm hm})
:=
\frac{1}{4N}
\sum_{s\in S}\sum_{a\in\mathcal A}
\left[
(\mathcal H^{\rm hm}Q^\star)(s,a)-Q^\star(s,a)
\right]^2 .
\]
Then
\[
\operatorname{Dev}(\mathcal H^{\rm hm})
\ge
\frac{[\sqrt N(\sqrt m-1)-2]_+}{N}
\gamma^{2D_T}R_T^2,
\qquad
[x]_+:=\max\{x,0\}.
\] 
In particular, for fixed \(N,\gamma,R_T,D_T\), the lower bound of error brought by strict-decomposition operator will increase in \(m\).}

\begin{proof}
Define $B_m:=\{(s,a)\in S\times\mathcal A:f(s,a)\notin C_{i(s)}\}$ and $B_m^+
:=
\{(s,a)\in B_m:f(s,a)\neq t\}$. If \((s,a)\notin B_m\), then $f(s,a)\in C_{i(s)}$.
Therefore
\begin{equation}    
(\mathcal H^{\mathrm{hm}}Q)(s,a)
=
r(s,a,f(s,a))+\gamma V_Q(f(s,a))
=
(HQ)(s,a).
\end{equation}

So strict decomposition  introduces no error on interior transitions.

Now take \((s,a)\in B_m\) and write $u=f(s,a)$, then \(u\notin C_{i(s)}\). The strict decomposition update is $(\mathcal H^{\mathrm{hm}}Q)(s,a)
=
r(s,a,u)$, whereas the global Bellman target is
\begin{equation}
(HQ)(s,a)
=
r(s,a,u)+\gamma V_Q(u).
\end{equation}

Evaluating at \(Q^\star\),
\begin{equation}
(\mathcal H^{\mathrm{hm}}Q^\star)(s,a)-Q^\star(s,a)
=
-\gamma V^\star(u).
\end{equation}
Thus the strict decomposition  defect is exactly the missing continuation value.

For pairs in \(B_m^+\), we have \(u\neq t\). By assumption, \(u\) has a
trap-free path to \(t\) of length \(d\le D_T\). Following this path gives return $
\gamma^{d-1}R_T$.
Since \(Q^\star\) is optimal,
\begin{equation}
V^\star(u)
\ge
\gamma^{d-1}R_T
\ge
\gamma^{D_T-1}R_T.
\end{equation}

Hence, for every \((s,a)\in B_m^+\),
\begin{equation}
\left[
(\mathcal H^{\mathrm{hm}}Q^\star)(s,a)-Q^\star(s,a)
\right]^2
\ge
\gamma^{2D_T}R_T^2.
\end{equation}

It remains to count useful boundary pairs. For a \(q\times q\) square partition
of an \(n\times n\) grid, the number of directed artificial boundary actions in
the full grid is
\begin{equation}
B_0(m)
=
4n(q-1)
=
4\sqrt N(\sqrt m-1).
\end{equation}

The absorbing target can remove at most four boundary actions as a source state,
and at most four boundary actions can have successor \(t\), for which
\(V_Q(t)=0\). Therefore $|B_m^+|\ge[B_0(m)-8]_+$.

Using the pointwise lower bound on \(B_m^+\),
\begin{equation}
\operatorname{Dev}(\mathcal H^{\rm hm})
\ge
\frac{|B_m^+|}{4N}\gamma^{2D_T}R_T^2.
\end{equation}

Thus
\begin{equation}
\operatorname{Dev}(\mathcal H^{\rm hm})
\ge
\frac{[B_0(m)-8]_+}{4N}\gamma^{2D_T}R_T^2.
\end{equation}

Substituting $B_0(m)=4\sqrt N(\sqrt m-1)$ 
gives 
\begin{equation}
R_{\mathrm{hm}}(N,m)
\ge
\frac{
[\sqrt N(\sqrt m-1)-2]_+
}{N}
\gamma^{2D_T}R_T^2.
\end{equation}

 \end{proof}

\section{Additional Experiments}

\subsection{SARSA}
\label{supp_sarsa}

\paragraph{Environment and objective.}
We evaluate decentralized energy management on the standard IEEE 9-, 14-, and 30-bus test systems. Each bus is treated as an aggregator equipped with a battery storage system, and the physical transmission-line topology defines the communication and neighborhood graph. The undirected bus-neighbor relations used in the experiments are listed in Table~\ref{tab:ieee-networks}. At time $t$, bus $i$ observes the current time index and its battery level $e_{i,t}\in[0,1]$, and chooses a charging/discharging action $a_{i,t}\in[-1,1]$. The battery evolves according to
\[
e_{i,t+1}
=
\min\{\max(e_{i,t}+a_{i,t},0),1\}.
\]
The objective is to minimize electricity cost while avoiding neighborhood overloads. Given price $p_{i,t}$, fixed load demand $\ell_{i,t}$, battery capacity $\overline E_i$, penalty coefficient $\lambda$, safe threshold $T_i$, and one-hop grid neighborhood $N(i)$ containing $i$, the local cost is
\[
c_{i,t}
=
p_{i,t}(a_{i,t}\overline E_i+\ell_{i,t})
+
\lambda
\left[
\sum_{j\in N(i)}
(a_{j,t}\overline E_j+\ell_{j,t})
-
T_i
\right]_+ .
\]
The global reward is defined by $r_t=-\sum_i c_{i,t}$, following the reward-maximization convention. Since prices and load demands are generated from bounded cyclic profiles with bounded multiplicative noise, the reward is bounded. For a fixed policy $\pi$, the tabular value function satisfies the Bellman fixed point
\[
Q^\pi = H^\pi Q^\pi,
\]
and SARSA provides the sample-path stochastic approximation
\[
Q(s_t,a_t)
\leftarrow
Q(s_t,a_t)
+
\alpha
\big[
r_t+\gamma Q(s_{t+1},a_{t+1})-Q(s_t,a_t)
\big].
\]
\begin{table}[htbp]
\centering
\small
\caption{Topologies (Undirected Graph) of the IEEE 9-, 14-, and 30-bus test systems.}
\label{tab:ieee-networks}
{%
\begin{tabular}{ll|ll|ll|ll}
\toprule
\multicolumn{2}{c}{\textbf{IEEE 9-Bus}} & \multicolumn{2}{c}{\textbf{IEEE 14-Bus}} & \multicolumn{4}{c}{\textbf{IEEE 30-Bus}} \\
\cmidrule(lr){1-2} \cmidrule(lr){3-4} \cmidrule(lr){5-8}
\textbf{Bus} & \textbf{Neighbors} & \textbf{Bus} & \textbf{Neighbors} & \textbf{Bus} & \textbf{Neighbors} & \textbf{Bus} & \textbf{Neighbors} \\
\midrule
1 & 4       & 1  & 2, 5            & 1  & 2, 3                 & 16 & 12, 17 \\
2 & 8       & 2  & 1, 3, 4, 5      & 2  & 1, 4, 5, 6           & 17 & 10, 16 \\
3 & 6       & 3  & 2, 4            & 3  & 1, 4                 & 18 & 15, 19 \\
4 & 1, 5, 9 & 4  & 2, 3, 5, 7, 9   & 4  & 2, 3, 6, 12          & 19 & 18, 20 \\
5 & 4, 6    & 5  & 1, 2, 4, 6      & 5  & 2, 7                 & 20 & 10, 19 \\
6 & 3, 5, 7 & 6  & 5, 11, 12, 13   & 6  & 2, 4, 7, 8, 9, 10, 28& 21 & 10, 22 \\
7 & 6, 8    & 7  & 4, 8, 9         & 7  & 5, 6                 & 22 & 10, 21, 24 \\
8 & 2, 7, 9 & 8  & 7               & 8  & 6, 28                & 23 & 15, 24 \\
9 & 4, 8    & 9  & 4, 7, 10, 14    & 9  & 6, 10, 11            & 24 & 22, 23, 25 \\
  &         & 10 & 9, 11           & 10 & 6, 9, 17, 20, 21, 22 & 25 & 23, 24, 26 \\
  &         & 11 & 6, 10           & 11 & 9                    & 26 & 25 \\
  &         & 12 & 6, 13           & 12 & 4, 13, 14, 15, 16    & 27 & 25, 28, 29, 30 \\
  &         & 13 & 6, 12, 14       & 13 & 12                   & 28 & 6, 8, 27 \\
  &         & 14 & 9, 13           & 14 & 12, 15               & 29 & 27, 30 \\
  &         &    &                 & 15 & 12, 14, 18, 23       & 30 & 27, 29 \\
\bottomrule
\end{tabular}%
}
\end{table}

This experiment directly matches the structural setting studied in the paper. The update associated with bus $i$ is not determined by bus $i$ alone, because the overload term in $c_{i,t}$ depends on the neighboring buses $j\in N(i)$. Thus, if the core of agent $i$ is its own local state-action block, a strict decomposition that only reads bus $i$ removes the cross-boundary variables needed to evaluate the neighborhood penalty. In operator terms, strict decomposition replaces the original Bellman operator $H^\pi$ by a local surrogate whose block update ignores part of the true dependence set.

Core-Halo keeps the same disjoint update ownership: each bus updates only its own local $Q$-table. The difference is that the local observation is augmented with read-only boundary context from the one-hop halo. In our implementation, the core observation of bus $i$ is its current hour and battery level, while the halo observation adds the average battery level of its immediate topological neighbors. Therefore, strict decomposition uses $o^{\mathrm{strict}}_{i,t}=(h_t,e_{i,t})$, whereas Core-Halo uses
\[
o^{\mathrm{halo}}_{i,t}
=
(h_t,e_{i,t},\overline e_{N(i),t}),
\qquad
\overline e_{N(i),t}
=
\frac{1}{|N(i)\setminus\{i\}|}
\sum_{j\in N(i)\setminus\{i\}} e_{j,t}.
\]
This one-hop halo is a compact tabular approximation to the sufficient boundary information induced by the physical grid topology. It preserves local ownership while allowing the SARSA target to condition on the neighboring battery context that affects overload penalties.

The environments use a cyclic 12-time-interval base profile for electricity prices and fixed load demands. The base profiles for all three IEEE systems are reported in Table~\ref{tab:base-profiles}. To model stochastic demand and price volatility, load demands are multiplied by triangular noise supported on $[0.8,1.2]$, and prices are multiplied by triangular noise supported on $[0.9,1.1]$. The same profile-generation procedure is used across centralized, strict-decomposition, and Core-Halo methods.

\begin{table}[h]
\centering
\small
\caption{Cyclic 12-hour base price (\$/MWh) and demand (MWh) profiles used for the IEEE 9, 14, and 30 bus test environments.}
\label{tab:base-profiles}
\begin{tabular}{ccccccc}
\toprule
\multirow{2}{*}{\textbf{Time Index}} & \multicolumn{2}{c}{\textbf{IEEE 9-bus}} & \multicolumn{2}{c}{\textbf{IEEE 14-bus}} & \multicolumn{2}{c}{\textbf{IEEE 30-bus}} \\
\cmidrule(lr){2-3} \cmidrule(lr){4-5} \cmidrule(lr){6-7}
& \textbf{Price} & \textbf{Demand} & \textbf{Price} & \textbf{Demand} & \textbf{Price} & \textbf{Demand} \\
\midrule
1  & 55 & 9.00  & 31 & 9.00  & 29 & 8.85  \\
2  & 48 & 7.05  & 29 & 7.20  & 27 & 7.20  \\
3  & 49 & 7.05  & 29 & 6.90  & 27 & 7.05  \\
4  & 63 & 10.20 & 33 & 10.80 & 31 & 10.65 \\
5  & 63 & 8.85  & 35 & 12.15 & 31 & 10.35 \\
6  & 61 & 6.30  & 35 & 12.15 & 29 & 8.85  \\
7  & 60 & 6.30  & 36 & 12.90 & 30 & 9.15  \\
8  & 65 & 7.95  & 36 & 13.20 & 31 & 10.20 \\
9  & 72 & 12.45 & 38 & 14.55 & 33 & 13.35 \\
10 & 77 & 15.00 & 38 & 15.00 & 34 & 15.00 \\
11 & 69 & 13.95 & 37 & 13.65 & 33 & 14.10 \\
12 & 62 & 11.25 & 34 & 11.10 & 32 & 11.10 \\
\bottomrule
\end{tabular}
\end{table}

All methods are trained with tabular SARSA for 250 episodes over 5 independent random simulation runs. During training, agents use $\epsilon$-greedy exploration with $\epsilon=0.1$ and run for 3600 environment steps per episode. During evaluation, we set $\epsilon=0$ and run deterministic policies for 360 steps per episode. We report two primary metrics over the final 20 episodes of each run: the mean total grid cost and the mean constraint-violation count. The total grid cost is the accumulated electricity payment plus overload penalties, and the violation count records the number of times a neighborhood demand exceeds its safe threshold.
\begin{wraptable}{r}{0.6\linewidth}
\centering
\small
\caption{Comparison of grid cost and constraint violations.}
\label{tab:ieee-eval}
\resizebox{0.6\textwidth}{!}{
\begin{tabular}{cccc}
\toprule
\textbf{Env} & \textbf{Algorithm} 
& \textbf{Cost} ($\downarrow$) 
& \textbf{Violation} ($\downarrow$) \\
\midrule
\multirow{3}{*}{IEEE 9}
& Centralized 
& 23,025.21 $\pm$ 1631.79 
& 8.48 $\pm$ 3.09 \\
& Strict Decom
& 34,169.32 $\pm$ 17196.56 
& 23.74 $\pm$ 20.67 \\
& \cellcolor{gray!15}Core-Halo
& 22,033.87 $\pm$ 1103.14
& 6.74 $\pm$ 2.63 \\
\midrule
\multirow{3}{*}{IEEE 14}
& Centralized 
& 22,914.17 $\pm$ 1535.79 
& 8.59 $\pm$ 2.99 \\
& Strict Decom
& 35,314.99 $\pm$ 14020.83 
& 30.65 $\pm$ 21.48 \\
& \cellcolor{gray!15}Core-Halo
& 22,009.02 $\pm$ 1135.86
& 7.35 $\pm$ 2.80 \\
\midrule
\multirow{3}{*}{IEEE 30}
& Centralized 
& 23,016.88 $\pm$ 1509.36 
& 8.80 $\pm$ 3.27 \\
& Strict Decom
& 42,034.87 $\pm$ 24448.21 
& 36.25 $\pm$ 28.98 \\
& \cellcolor{gray!15}Core-Halo
& 22,237.34 $\pm$ 1334.02
& 7.38 $\pm$ 2.51 \\
\bottomrule
\end{tabular}
}
\end{wraptable}
The results show that strict decomposition incurs substantially higher cost and more constraint violations, indicating that local-only observations are insufficient for anticipating neighborhood overload penalties. This is precisely the boundary-truncation failure predicted by the Core-Halo framework: the strict local SARSA update discards variables that enter the true Bellman target through the reward. Core-Halo reduces this structural gap by adding one-hop read-only context, which keeps the local tabular problem tractable while retaining the most relevant boundary information. Although a fully observable centralized tabular SARSA agent is asymptotically well specified, it suffers in finite training from the curse of dimensionality: the joint state-action space grows rapidly with the number of buses, making exploration sparse under the fixed 250-episode budget. Core-Halo avoids this bottleneck by using small local $Q$-tables while preserving the neighborhood information needed for constraint-aware control.

\subsection{Additional Details for Multi-Intersection Traffic Control}
\label{supp_dqn}
\begin{figure}[h]
    \centering
    \begin{subfigure}[c]{0.42\linewidth}
        \centering
        \includegraphics[height=4.5cm, width=\linewidth, keepaspectratio, trim={0 0 0 35}, clip]{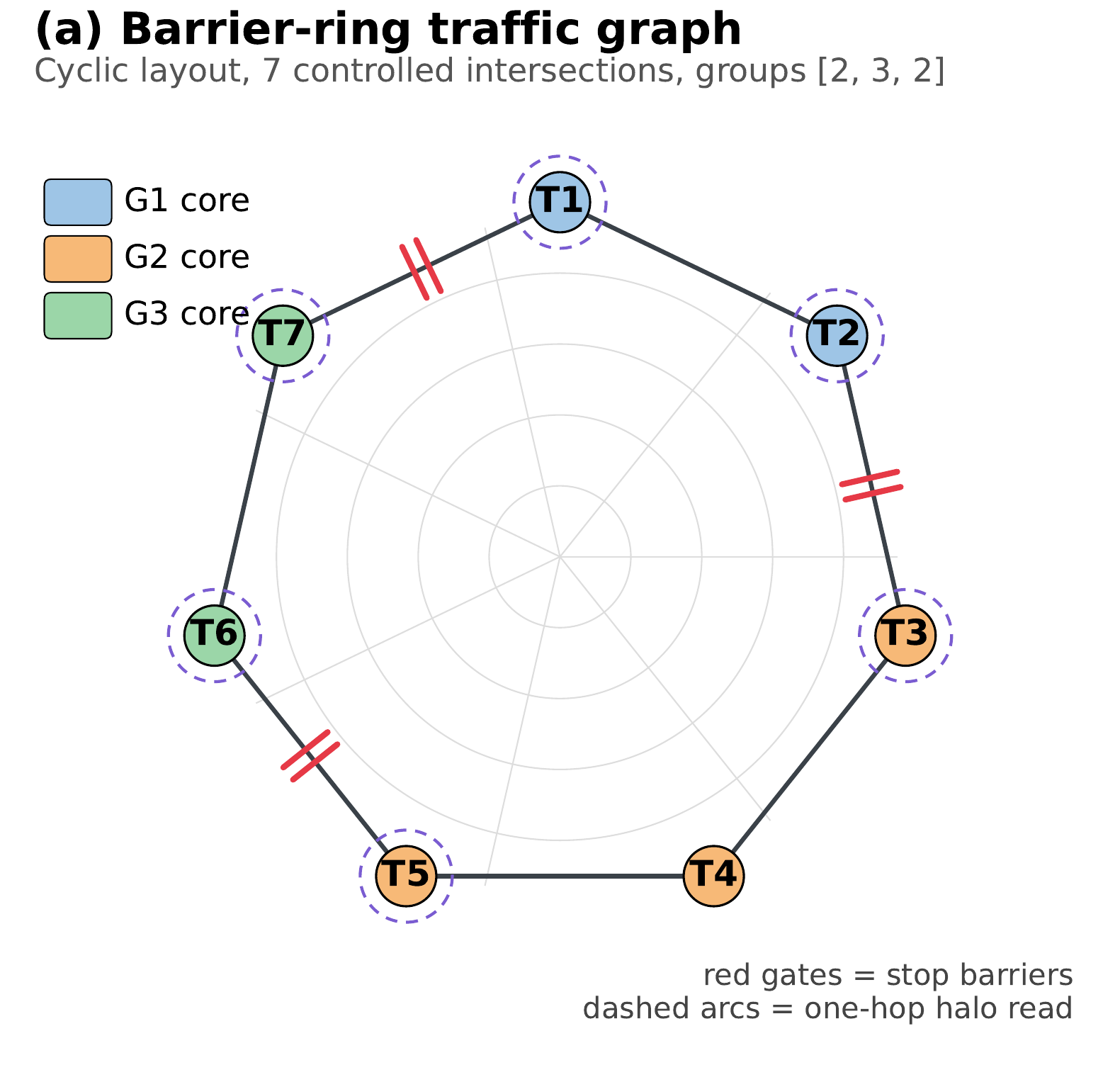}
        \caption{Barrier-ring traffic graph. Cyclic layout with 7 controlled intersections partitioned into groups $[2, 3, 2]$. Red gates mark stop barriers; dashed arcs denote one-hop halo reads.}
        \label{fig:ring-topology}
    \end{subfigure}
    \hfill
    \begin{subfigure}[c]{0.56\linewidth}
        \centering
        \includegraphics[height=6.5cm, width=\linewidth, keepaspectratio, trim={0 0 0 30}, clip]{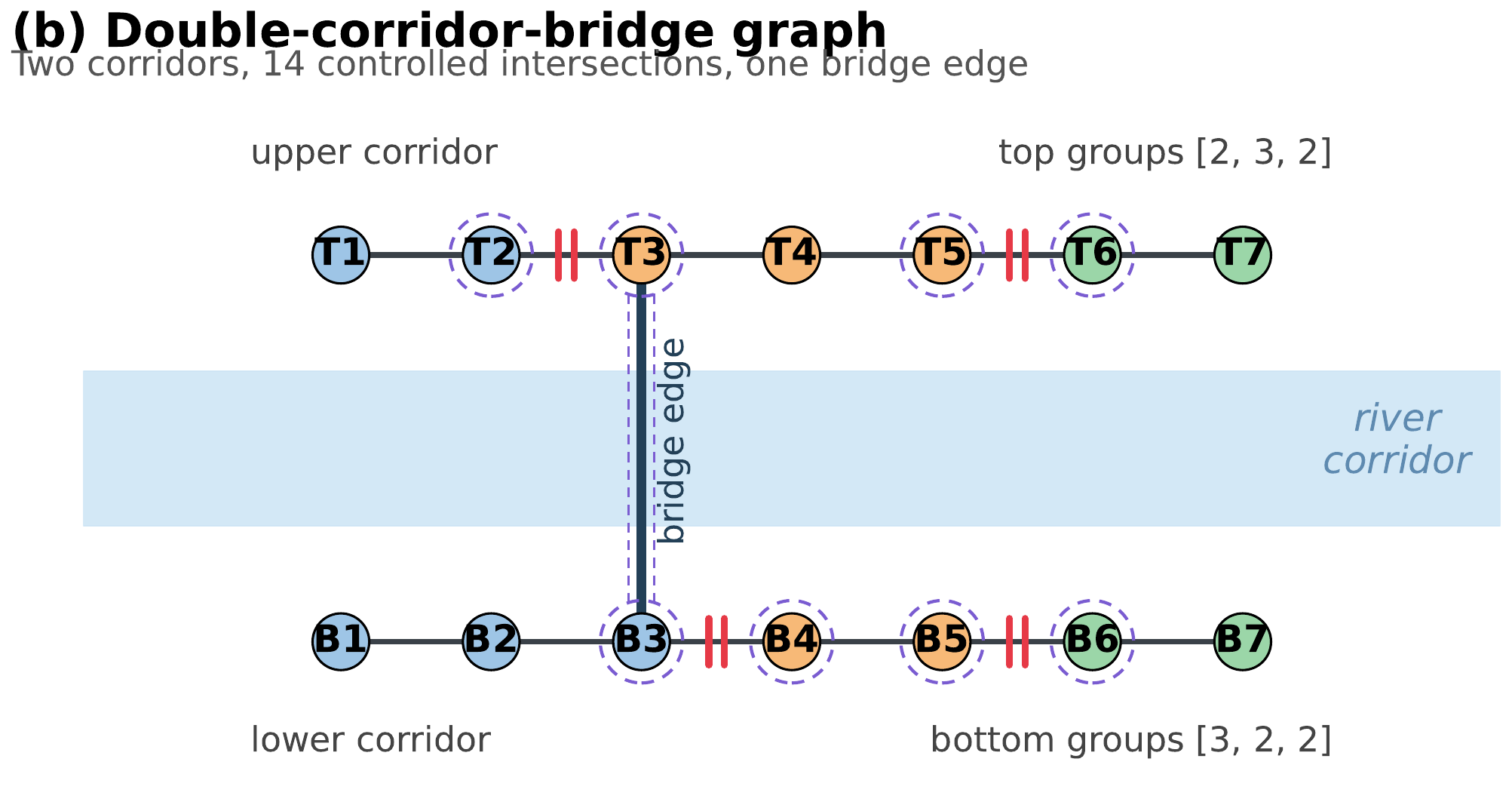}
        \caption{Double-corridor-bridge graph. Two-corridor layout with 14 controlled intersections joined by a single bridge edge; top groups $[2, 3, 2]$, bottom groups $[3, 2, 2]$.}
        \label{fig:bridge-topology}
    \end{subfigure}
    \caption{Two SUMO topologies used to evaluate the strict decomposition , Core-Halo, and centralized DLSA variants. Both layouts share the same group-size structure $[2,3,2]$ within each corridor/ring, but differ in connectivity: (a) is cyclic with a single barrier ring, while (b) introduces a bridge edge linking two parallel corridors.}
    \label{fig:topologies}
\end{figure}
\paragraph{Environment and decomposition.}
The main traffic-control benchmark is a cyclic ring with seven traffic-light intersections partitioned into three decentralized groups of sizes $[2,3,2]$. Consecutive groups are separated by physical stop-sign barriers, with two stop signs between neighboring traffic-light groups and a mandatory five-second halt at each stop. For the decentralized agent graph, we collapse the stop nodes and retain only the traffic-light connectivity, so each traffic light has two graph neighbors in the ring. The Core-Halo variant uses a one-hop halo with radius $r=1$, which gives each traffic-light controller access to its own local observation together with the observations of adjacent traffic lights, while update ownership remains restricted to the assigned core. The strict decomposition  variant uses the same core partition but removes the halo observations, so each local Q-network observes only its own group. We also include a random traffic-network topology with the same decentralized learning protocol to test whether the observed behavior is specific to the regular ring structure or persists under less structured connectivity.

\paragraph{Training and evaluation protocol.}
All traffic-control methods are implemented with DQN and trained for $100$K environment steps using three independent random seeds. Vehicles enter and exit only through traffic-light stubs, with arrival rate $\lambda=0.4$ vehicles per second, corresponding to approximately $1{,}440$ vehicles in each one-hour SUMO simulation. After training, each policy is evaluated with deterministic action selection, i.e., $\epsilon=0$, over five one-hour evaluation episodes for each algorithm and seed. We report three metrics: mean halting count per environment step, mean total waiting time across controlled lanes, and throughput measured as the number of vehicles arriving per simulated hour. Statistical comparisons are computed across seeds using Welch's $t$-test; because only three seeds are used, we interpret non-significant differences as directional evidence rather than definitive statistical separation.

\paragraph{Additional result interpretation.}
In the cyclic-ring environment, Core-Halo obtains the lowest mean waiting time and a lower halting count than strict decomposition, while strict decomposition exhibits substantially larger cross-seed variance. In particular, the standard deviation of waiting time is much larger for strict decomposition than for centralized or Core-Halo, indicating that strict local truncation can make decentralized Q-learning more sensitive to random initialization and sampling. Throughput does not clearly separate the methods in this setting, which is expected because the chosen arrival rate is below the saturation regime where throughput becomes the binding constraint. The open-ended corridor variant provides a stronger boundary-dependence stress test: once the wrap-around edge is removed, endpoint agents have less context, and strict decomposition suffers a clear reward collapse while Core-Halo remains close to centralized performance. This supports the interpretation that the performance gap is driven by missing boundary information rather than by the particular DQN implementation.

\section{Other scenarios}

\label{app:other}

\subsection{Representative settings covered by the unified framework}
\label{app:other-scenarios-table}
The framework is not limited to the experimental domains studied in this paper. Table~\ref{tab:framework-five} summarizes five representative settings that share the same decentralized fixed-point template. The Q-learning, SARSA, and PageRank rows correspond directly to experiments in the main text, while the MPC/QP and implicit-layer rows illustrate broader settings where the same Core-Halo locality principle applies.

\begin{table}[htbp]
\centering
\caption{Representative settings covered by the Core-Halo framework. Panel A identifies the fixed-point equation and the corresponding write/read decomposition. Panel B lists typical well-posedness conditions and representative applications. The structural requirement common to all rows is Definition~\ref{def:Core-Halo}: each block $[\bar F(x)]_{D_i}$ must be computable from the halo variables $x_{S_i}$.}
\label{tab:framework-five}
\scriptsize
\setlength{\tabcolsep}{3.0pt}
\renewcommand{\arraystretch}{1.13}

\begin{tabularx}{\textwidth}{@{}L{2.55cm}L{2.05cm}Y Y@{}}
\toprule
\rowcolor{gray!12}
\multicolumn{4}{@{}l}{\textbf{Panel A: Local fixed-point structure}} \\
\midrule
\textbf{Problem family} &
\textbf{Fixed point} &
\textbf{Owned core $D_i$} &
\textbf{Read-only halo $S_i$} \\
\midrule
\textbf{Value-based RL / Q-learning} &
$Q^\star=H Q^\star$ &
State-action block $C_i\times\mathcal A$. &
One-step successor states of $C_i$ and their action values used in the Bellman target. \\
\addlinespace[2pt]
\textbf{SARSA / Expected SARSA} &
$Q^\pi=H^\pi Q^\pi$ &
State-action block $C_i\times\mathcal A$. &
Successors under the policy, plus neighboring reward or cost variables needed by the local target. \\
\addlinespace[2pt]
\textbf{Sparse MPC / QP splitting} &
$z^\star=T_{\rm solver}(z^\star)$ &
Subsystem-time variables over the prediction horizon. &
Neighboring subsystems and adjacent time slices appearing in dynamics, costs, or coupled constraints. \\
\addlinespace[2pt]
\textbf{PageRank / personalized PageRank} &
$x^\star=\alpha P^\top x^\star+(1-\alpha)v$ &
Graph-node block $D_i$. &
Predecessor neighborhood $D_i\cup\{k:\exists j\in D_i,\;P_{kj}>0\}$. \\
\addlinespace[2pt]
\textbf{Sparse fixed points / implicit layers} &
$z^\star=Az^\star+b$ or $z^\star=T_\theta(z^\star;u)$ &
Coordinates, features, or spatial sites. &
Nonzero pattern of $A$, finite receptive field, or convolution/sparsity neighborhood. \\
\bottomrule
\end{tabularx}

\vspace{0.55em}

\begin{tabularx}{\textwidth}{@{}L{2.55cm}Y Y@{}}
\toprule
\rowcolor{gray!12}
\multicolumn{3}{@{}l}{\textbf{Panel B: Well-posedness and representative uses}} \\
\midrule
\textbf{Problem family} &
\textbf{Typical sufficient condition} &
\textbf{Representative uses and references} \\
\midrule
\textbf{Value-based RL / Q-learning} &
Finite discounted tabular MDP or joint MDP with bounded rewards; $H$ is a $\gamma$-contraction. &
MiniGrid-style navigation, multi-agent control, and traffic-signal learning \cite{watkins1992q,IQL,terry2021pettingzoo}. \\
\addlinespace[2pt]
\textbf{SARSA / Expected SARSA} &
Finite discounted on-policy setting with bounded rewards; Expected SARSA is the stationary mean operator. &
Decentralized markets, smart-grid energy management, and dynamic pricing \cite{vanseijen2009expected,chang2020decentralized}. \\
\addlinespace[2pt]
\textbf{Sparse MPC / QP splitting} &
Sparse strongly convex finite-horizon QP, or a contractive/averaged operator-splitting solver. &
HVAC, microgrids, water networks, platoons, and sparse networked control \cite{odonoghue2013splitting,stellato2020osqp,serale2018mpc}. \\
\addlinespace[2pt]
\textbf{PageRank / personalized PageRank} &
For $\alpha<1$, the damped PageRank map is a contraction; locality follows from graph sparsity. &
Ranking, recommendation, trust propagation, citation graphs, and distributed PageRank \cite{Page1998PageRank,gleich2015pagerank,ishii2010distributed}. \\
\addlinespace[2pt]
\textbf{Sparse fixed points / implicit layers} &
Linear maps with $\|A\|_c<1$, or nonlinear implicit layers with contractive, averaged, or monotone parameterizations. &
Circuit/network solvers, PDE surrogates, inverse problems, and memory-efficient implicit neural layers \cite{bertsekas1983distributed,bai2019deq,winston2020mondeq,ghaoui2021implicit,fung2022jfb}. \\
\bottomrule
\end{tabularx}
\end{table}

The table separates two kinds of assumptions. The Core-Halo bookkeeping only requires the exact locality condition in Definition~\ref{def:Core-Halo}. The contraction, monotonicity, or convexity conditions in Panel B are standard problem-specific assumptions used to ensure that the corresponding fixed point is well posed and that stochastic approximation or deterministic solver iterations are stable. The SARSA row is interpreted through its stationary mean operator $H^\pi$, with tabular SARSA as the corresponding sample-path stochastic approximation. The final row covers both sparse linear maps and nonlinear implicit layers whose local block maps have sparse or finite-receptive-field dependencies.

Two clarifications are worth making. First, the SARSA row is interpreted through its stationary mean operator $H^\pi$; Expected SARSA is the clean mean-field version, while tabular SARSA is the corresponding sample-path stochastic approximation. Second, the last row is intentionally broader than plain linear solvers: vanilla deep equilibrium models are nonlinear implicit fixed-point systems, so to include them rigorously one should phrase the row as ``sparse linear fixed-point solvers + DEQ-type implicit layers'' rather than only ``linear fixed-point solvers.''

On the graph side, distributed randomized PageRank algorithms provide a stochastic/decentralized realization of the PageRank row \cite{ishii2010distributed}. On the implicit-model side, classical DEQ, implicit deep learning, and Jacobian-free training give further support for the last row \cite{ghaoui2021implicit,fung2022jfb}.

\end{document}